\DeclareMathOperator*{\argmin}{arg\,min}
\DeclareMathOperator*{\minimize}{minimize}
\newcommand{\ts}{\textsuperscript}
\newcommand{\pluseq}{\mathrel{+}=}
\newtheorem{theorem}{Theorem}[section]
\newtheorem{lemma}[theorem]{Lemma}
\newtheorem{case}{Case}
\begin{document}

\title{Joint System and Algorithm Design for Computationally Efficient Fan Beam Coded Aperture X-ray Coherent Scatter Imaging}

\author{Ikenna~Odinaka,
		Joseph~A.~O'Sullivan,
		David~G.~Politte,
		Kenneth~P.~MacCabe,
		Yan~Kaganovsky,
		Joel~A.~Greenberg,
		Manu~Lakshmanan,
		Kalyani~Krishnamurthy,
		Anuj~Kapadia,
		Lawrence~Carin,
		and~David~J.~Brady
		
\thanks{This paper has been submitted to IEEE Transactions on Computational Imaging for consideration.}

\thanks{Ikenna~Odinaka (ikenna.odinaka@duke.edu), Yan~Kaganovsky, Joel~A.~Greenberg, Lawrence~Carin, and David~J.~Brady are with the Department of Electrical and Computer Engineering, Duke University, Durham, NC 27708.}

\thanks{Joseph~A.~O'Sullivan is with the Department of Electrical and Systems Engineering, Washington University in St. Louis, Saint Louis, MO 63130}

\thanks{David~G.~Politte is with the Mallinckrodt Institute of Radiology, Washington University in St. Louis, Saint Louis, MO 63110}

\thanks{Kenneth~P.~MacCabe is with Physical Numerics, Raleigh, NC 27675}

\thanks{Manu~Lakshmanan is with the Department of Biomedical Engineering, Duke University, Durham, NC 27708}

\thanks{Kalyani~Krishnamurthy is with Pendar Medical LLC, Cambridge, MA 02138}

\thanks{Anuj~Kapadia is with the Department of Radiology, Duke University Medical Center, Durham, NC 27710}

\thanks{K.P. MacCabe and K. Krishnamurthy were with the Department of Electrical and Computer Engineering, Duke University, Durham, NC 27708, when this work was done.}
 
\thanks{This work has been supported by the U.S. Department of Homeland Security Science and Technology Directorate Contract Number: HSHQDC-11-C-00083}
}

\maketitle

\begin{abstract}
In x-ray coherent scatter tomography, tomographic measurements of the forward scatter distribution are used to infer scatter densities within a volume. A radiopaque 2D pattern placed between the object and the detector array enables the disambiguation between different scatter events. The use of a fan beam source illumination to speed up data acquisition relative to a pencil beam presents computational challenges. To facilitate the use of iterative algorithms based on a penalized Poisson log-likelihood function, efficient computational implementation of the forward and backward models are needed. Our proposed implementation exploits physical symmetries and structural properties of the system and suggests a joint system-algorithm design, where the system design choices are influenced by computational considerations, and in turn lead to reduced reconstruction time. Computational-time speedups of approximately 146 and 32 are achieved in the computation of the forward and backward models, respectively.  Results validating the forward model and reconstruction algorithm are presented on simulated analytic and Monte Carlo data.
\end{abstract}

\section{Introduction}  \label{sec:introduction}
X-ray coherent scatter imaging involves the reconstruction of an object's volumetric scatter density from tomographic measurements of the scattered x-ray data. The forward scatter distribution from an object is modeled as a superposition of the intensities of the scatter from all object points. The scatter from each object point occurs due to photons illuminating the object from all incident angles at different energy-dependent intensities, as permitted by the source configuration. The detailed forward model is given below, but it is immediately apparent that the central challenge in x-ray coherent scatter imaging is the separation of the scatter back into a volumetric scatter density. 

\begin{figure*}%
\centering
	\subfigure[Fan beam geometry schematic]{%
		\includegraphics[width=0.6\textwidth]{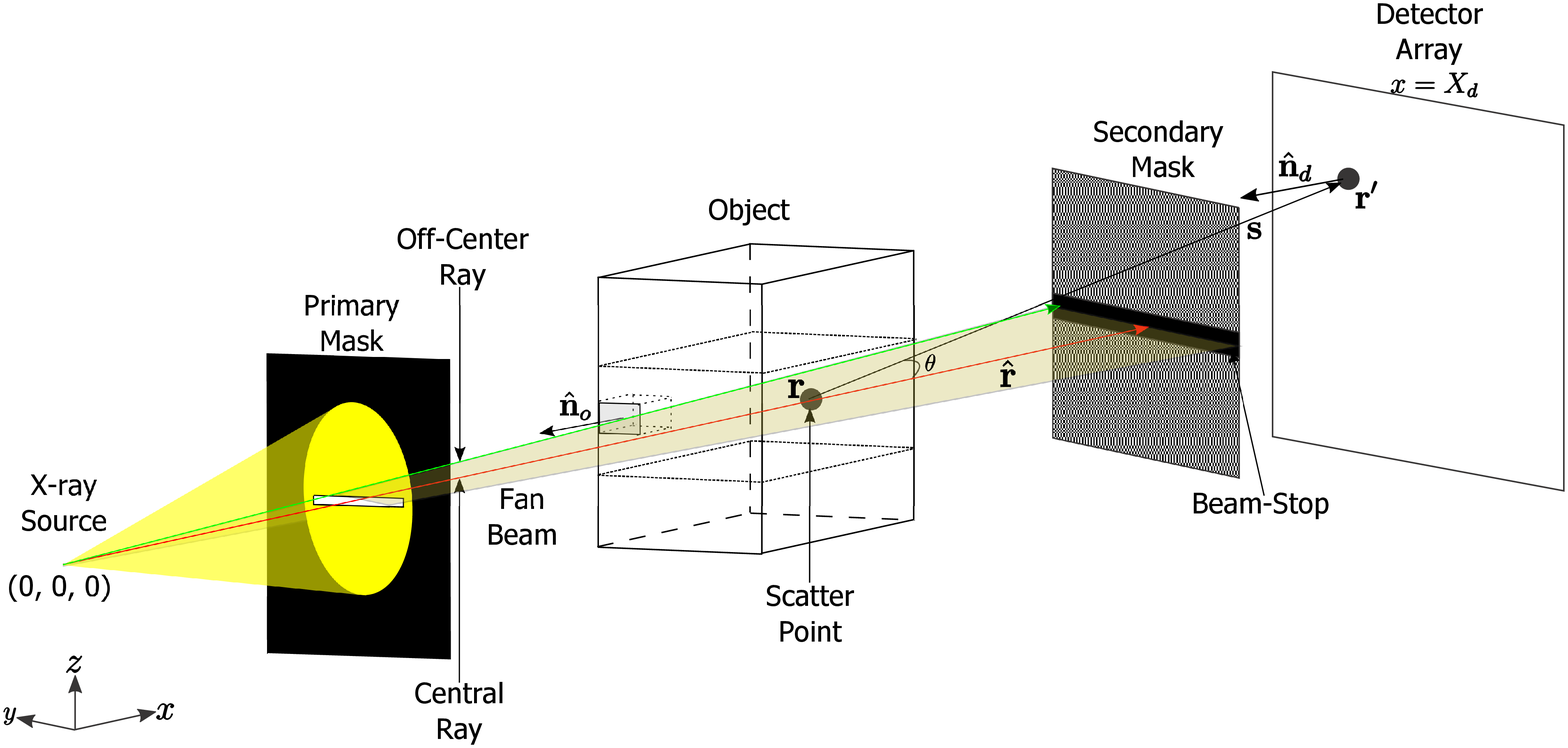}%
		\label{fig:fanbeam_schematic}}%
	\\
	\subfigure[Secondary mask image]{%
		\includegraphics[width=0.3\textwidth]{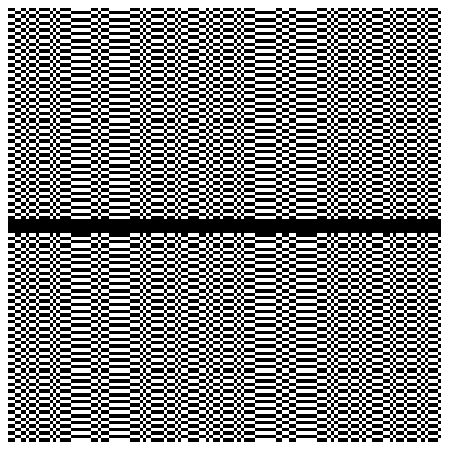}%
		\label{fig:secondary_mask_MURI}}%
	~
	\subfigure[Disambiguating scatter]{%
		\includegraphics[width=0.5\textwidth]{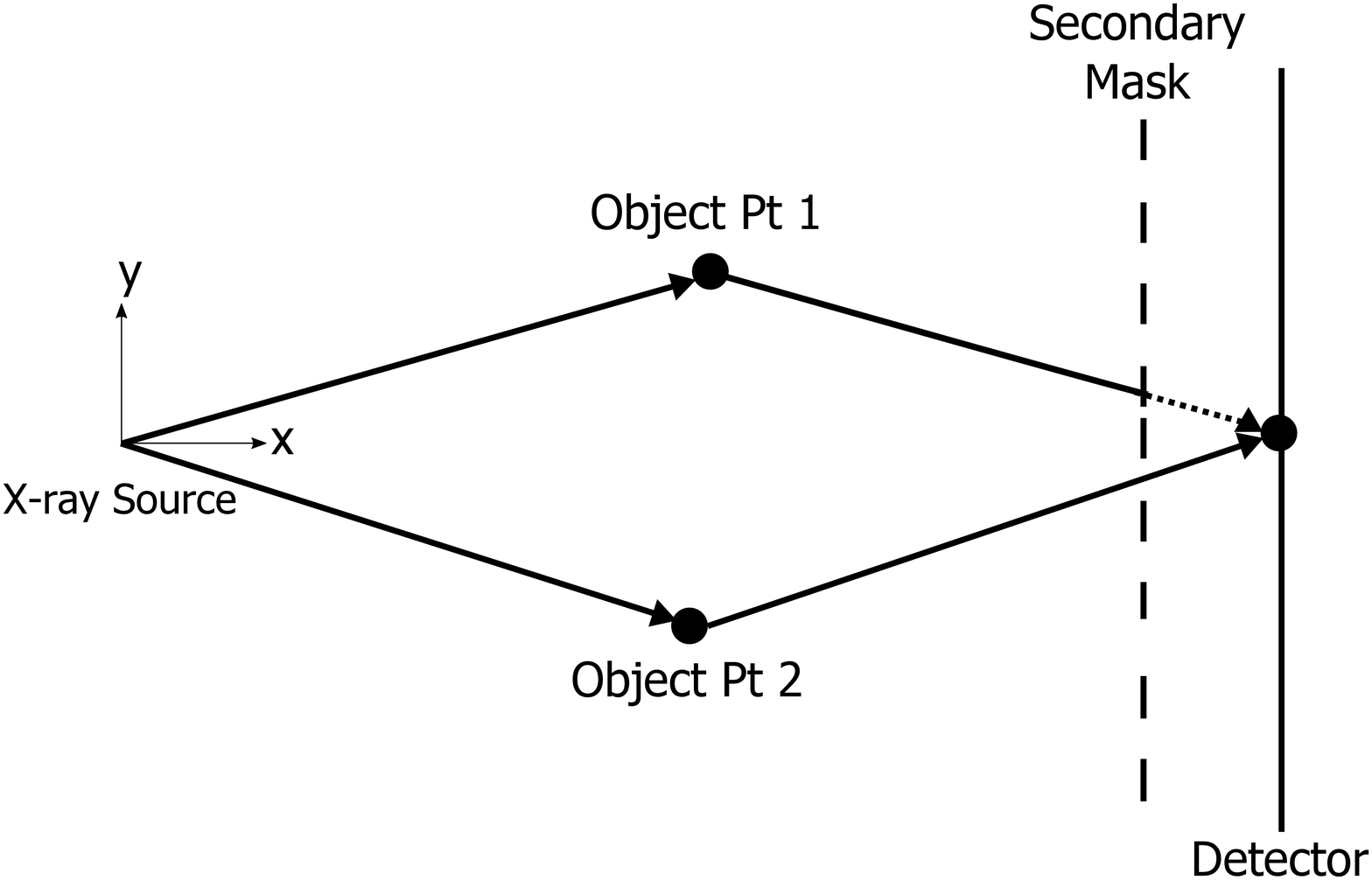}%
		\label{fig:mask_in_action}}%
	\\
	\subfigure[Translation symmetry]{%
		\includegraphics[width=0.5\textwidth]{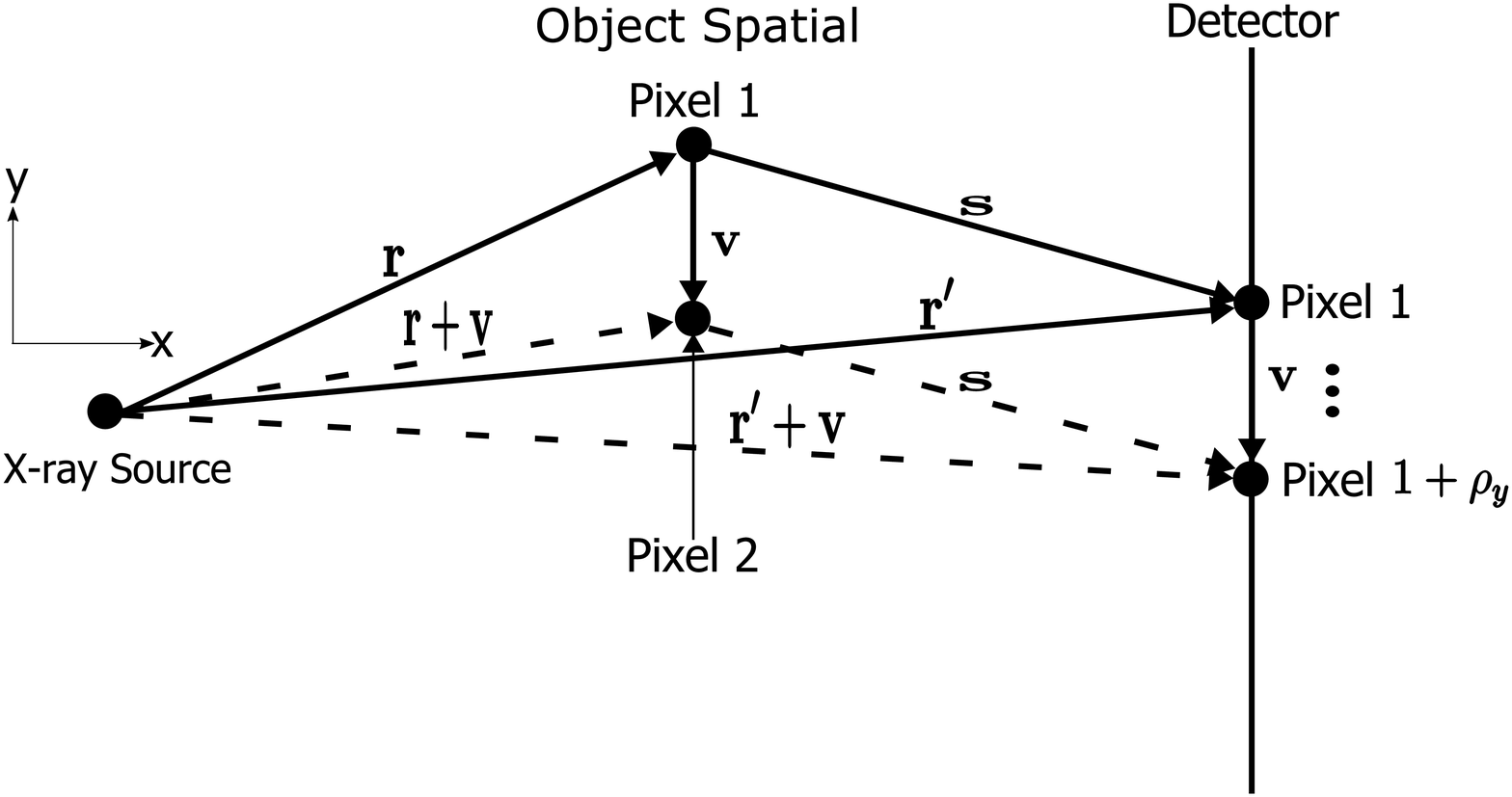}%
		\label{fig:translation_symmetry}}%
	\caption{In (a), the focal spot of the source is located at the origin of the coordinate system, locations in the object volume and detector array are given as $\mathbf{r}$ and $\mathbf{r}^{\prime}$, respectively. The scatter angle and scatter vector are denoted by $\theta$ and $\mathbf{s}$, respectively. $\hat{\mathbf{r}}$ is a unit vector along the direction from the source to an object point, $\hat{\mathbf{n}}_o$ is a unit normal vector to the face of the rectangular object slice illuminated by the fan beam, and $\hat{\mathbf{n}}_d$ is a unit normal vector to the front of the detector array. The rectangular object slice is exaggerated to illustrate the unit normal vector. The secondary mask in the fan beam geometry schematic is isolated in (b). (c) shows an example of the secondary mask differentiating scatter from different object locations to the same detector location. The scattered photons from object point 1 are blocked by the secondary mask, while those from object point 2 are transmitted. A single slice through the secondary mask and the detector array, parallel to the y-axis, is depicted in (c). (d) shows the effect of translation symmetry on object pixel width selection. $\mathbf{v}$ is a vector corresponding to the translation of an object pixel by one pixel width along the $y$ direction. $\rho_y$ is the translation symmetry step size.}%
	\label{fig:fanbeam_setup}%
\end{figure*}

One approach to separate the scatter from the multiplexed measurements is to obtain different views of the object by using a rotating gantry \cite{Schloraka2002}. An alternative and novel approach is the use of a coded aperture between the object and the detector \cite{Brady2013coding, Brady2015}, which prevents the need for a rotating gantry. Two important design elements in coded aperture x-ray coherent scatter imaging are a primary mask between the source and the object, and a coded aperture (secondary mask) between the object and the detector array. Whereas the primary mask serves to shape the incident beam, the secondary mask is responsible for blocking the transmitted x-rays and disambiguating scatter angles. 

Previous work in coded aperture x-ray coherent scatter imaging has shown the success of a pencil beam system (primary mask with a single small hole) \cite{Maccabe2012pencil}. We scale up to a fan beam system (primary mask with a slit) to accelerate measurement acquisition (see Fig.~\ref{fig:fanbeam_schematic}). For this, new codes and algorithms are needed for efficient data collection and inference. Brady \textit{et al.} \cite{Brady2013coding} give a detailed description of the design of new codes for a fan beam system. Here, we focus on efficient data inversion. MacCabe \textit{et al.} \cite{Maccabe2013snapshot} analyzed the fan beam system with a 2D energy-integrating detector array and a 2D secondary mask (see Fig.~\ref{fig:secondary_mask_MURI}) from the standpoint of recovering the object's spatial and angular distributions, under the assumption that the object may be factorized in space and angle. Moreover, their system matrix was small enough to be stored in main memory on a standard computer. In this paper, we focus on general spatial and spectral (momentum transfer) distributions and we consider a much larger system, where the storage of the system matrix may not be feasible. Unlike the angular distribution, the spectral distribution can be used as a unique signature for identifying materials \cite{Greenberg2015}.

Coded aperture design is based upon a combination of objectives. As many photons as possible should be allowed to hit the detector to increase the signal-to-noise ratio. However, a smaller number of holes in an aperture may allow for a more accurate determination of object location and scatter angle. For example, if the aperture has a single hole, the illuminated detector points and the aperture hole define scattered rays that can be traced back to unique object locations. Designs of coded apertures range from intuitive to analytical. On the analytical side, there is a desire for a uniform sensitivity to a wide range of scatter locations and scatter angles. Such analytical designs are described elsewhere \cite{Brady2013coding}. There is also a computational imaging aspect of the secondary mask design. The secondary mask and the inference algorithm may be designed jointly, with a goal of optimizing a measure of the reconstruction performance. In this view, the ad hoc intuitive arguments for secondary mask selection play a subsidiary role to quantitative measures of performance. While this is our ultimate goal, this paper focuses on inference algorithm development using a penalized maximum likelihood estimation approach.

The inference algorithm relies critically on a computational representation of the forward model for the data. Our forward model has been derived analytically and tested using analytic and Monte Carlo simulations. We propose an efficient representation of the forward model based upon physical symmetries and structural properties (e.g. smoothness considerations) of the system, described in detail below. The physical symmetries assumptions are readily verifiable in analytical and Monte Carlo models, but less so in experiments. The ultimate benefit of the model and approach described here will be determined in part by the calibration process for verifying alignment and symmetry. Even if the symmetry assumptions break down in practice, other aspects of the efficient computation of the forward model such as smoothness consideration may be valid. The computational implementation also involves a trade-off between on-line and off-line computations which can be optimized for a target computer. 
 
There is a connection between the computational time of the iterative algorithm and judicious choices for the sampling in the image domain. If the voxel size is chosen to be an integer multiple of the spacing of pixels on the detector, many geometry computations can be reused. A common technique for accelerating the convergence of an iterative algorithm involves partitioning the detector measurements and using each partition (subset) in turn to update the estimated image parameters. This technique is called ordered subsets (OS). The symmetries available in the ideal case motivate a particular choice of subsets in an ordered subset expectation-maximization OSEM-type algorithm.

Reconstruction results for simulated data based on the analytic forward model and Monte Carlo simulation demonstrate the algorithmic performance for a particular choice of the secondary mask. Computational-time speedups of about 146 and 32 are obtained for the forward and backward models, respectively. In addition, the spatial distribution and the momentum transfer profiles of the simulated object are recovered quickly and fairly accurately, with only a few ordered subset EM-type iterations required.

The remainder of this paper is organized as follows: In Section~\ref{sec:forward_model}, we describe the forward model. Sections~\ref{sec:computation_forward_model} and \ref{sec:computation_backward_model} describe efficient computations of the forward and backward models, respectively. The measurement model and image recovery problem are described in Section~\ref{sec:image_recovery_problem_desc}, while Section~\ref{sec:reconstruction_algorithm} develops the regularized ordered subset EM-type reconstruction algorithm. The results of applying the ordered subset EM-type algorithm to analytically and Monte Carlo simulated data are presented in Section~\ref{sec:results}, while concluding remarks are given in Section~\ref{sec:conclusions}. Appendices~\ref{app:forward_model} and \ref{app:reconstruction_algorithm_details} describe, in detail, the derivation of the forward model and the image recovery algorithm, respectively.

\section{Forward Model} \label{sec:forward_model}
In the fan beam coded aperture coherent scatter model, the x-ray source transmits photons in a fan within a plane. As shown in Fig.~\ref{fig:fanbeam_schematic}, each photon illuminates and interacts with the object. The photons are either absorbed, transmitted, or scattered by the object. The transmitted and scattered photons propagate from the object to the detector array and are either blocked or transmitted by the intervening secondary mask. 

As shown in Fig.~\ref{fig:fanbeam_schematic}, the focal spot of the source is located at the origin of the coordinate system and the fan beam resides in the $z = 0$ plane. Object scatter locations are indexed by $(x, y)$ coordinates and corresponding object point $\mathbf{r} = [x, y, 0]$, with positive $x$-coordinate pointing from the source through the object to the secondary mask and the detector. Since the source is located at the origin, $\mathbf{r}$ also serves as the source to object point vector, with magnitude $r$ and unit vector given as $\hat{\mathbf{r}}$.  The detector plane is perpendicular to the central ray of the x-ray fan which goes along the x-axis. A point on the detector is $\mathbf{r}^{\prime} = [X_d, y^{\prime}, z^{\prime}]$. The unit normal vector to the detector plane is $\hat{\mathbf{n}}_d$ as shown in Fig.~\ref{fig:fanbeam_schematic}. The object point and the detector point determine the scatter vector $\mathbf{s} = \mathbf{r}^{\prime} - \mathbf{r}$, with magnitude $s$ and unit vector $\hat{\mathbf{s}}$. The scatter vector and the source to object point vector determine the scatter angle $\theta$, shown in Fig.~\ref{fig:fanbeam_schematic}. 

The flux measured by the energy-integrating detector array at location $\mathbf{r}^{\prime}$ is given as 

\begin{equation}
g(\mathbf{r}^{\prime}) = \int \int \, H\left(\mathbf{r}^{\prime}, \mathbf{r}, q\right) f(\mathbf{r}, q) d\mathbf{r} \, dq,
\label{eq:forward_model_energy_integrating}
\end{equation}
\noindent{where}
\begin{eqnarray}
H\left(\mathbf{r}^{\prime}, \mathbf{r}, q\right) & = & C G_{s o}(\mathbf{r}) G_{o d}(\mathbf{s}) T(\mathbf{r}, \mathbf{r}^{\prime}) \Delta\theta S(\theta, q) \label{eq:forward_operator_energy_integrating} \\
G_{s o}(\mathbf{r}) & = & \frac{x}{(x^2 + y^2)^{1.5}} \nonumber \\
G_{o d}(\mathbf{s}) & = & \frac{\left|\hat{\mathbf{n}}_d \cdot \hat{\mathbf{s}}\right|}{s^2}.\nonumber \\
S(\theta, q) & = & \frac{q \left(1 + \cos^2\theta\right) \cos\left(\frac{\theta}{2}\right)}{\sin^2\left(\frac{\theta}{2}\right)} \Phi\left(\frac{h c q}{\sin\left(\frac{\theta}{2}\right)}\right)\nonumber.
\end{eqnarray}
\noindent{$H\left(\mathbf{r}^{\prime}, \mathbf{r}, q\right)$ is the forward operator, which includes geometric factors such as the source-to-object geometry factor $G_{s o}(\mathbf{r})$, the object-to-detector geometry factor $G_{o d}(\mathbf{s})$, the coded aperture mask modulation (transmission) factor $T(\mathbf{r}, \mathbf{r}^{\prime})$, and the scatter angle spread $\Delta\theta$.} The forward operator also includes a spectral factor $S(\theta, q)$ due to photon-matter interaction and a normalization constant $C$. $\Phi(E)$ is the spectra of the x-ray source, $h$ is Planck's constant, and $c$ is the speed of light. The object scatter density $f(\mathbf{r}, q)$, which is to be recovered, is a function of the object spatial location $\mathbf{r}$ and momentum transfer $q$, where $q$ is given in \si{\angstrom}$^{-1}$. See Appendix~\ref{app:forward_model} for details on the derivation of the multiplicative factors that comprise $H\left(\mathbf{r}^{\prime}, \mathbf{r}, q\right)$. Although the forward model is expressed using continuous variables and integrals, it is implemented discretely. 

The linear forward model given in Eq.~\ref{eq:forward_model_energy_integrating} ignores the self-attenuation of the incident source-to-object photons and scattered object-to-detector photons. Moreover, the model does not account for Compton scattering, which dominates coherent scattering at larger scatter angles. Ignoring self-attenuation and Compton scattering produces a linear model that is adequate for weakly attenuating objects and scatter measurements acquired at low scatter angles \cite{Harding1987, Greenberg2013snapshot}.

\section{Computation of the Forward Model} \label{sec:computation_forward_model}
In anticipation of an iterative algorithm for estimating the object scatter density, presented in Section~\ref{sec:image_recovery_problem_desc}, efficient computational implementations of the forward and backward models are required. The factors in Eq.~\ref{eq:forward_model_energy_integrating} may be reordered to avoid repeated computations as follows 

\begin{equation}
g(\mathbf{r}^{\prime})  =  \int \, C G_{s o}(\mathbf{r}) G_{o d}(\mathbf{s}) T(\mathbf{r}, \mathbf{r}^{\prime}) \Delta\theta W(\mathbf{r}, \theta) d\mathbf{r}, \label{eq:forward_model_energy_integrating_q_integrate}
\end{equation}
\noindent{where}
\begin{equation}
\label{eq:q_integral}
W(\mathbf{r}, \theta) = \int S(\theta, q) f(\mathbf{r}, q) \, dq
\end{equation}
\noindent{is the effective spectral factor. The integral in Eq.~\ref{eq:q_integral} takes scatter density, at a given object point, as a function of momentum transfer and produces scatter density as a function of scatter angle.} A straightforward implementation of the forward model in Eq.~\ref{eq:forward_model_energy_integrating_q_integrate} yields a computational structure given in Algorithm~\ref{alg:raw_forward_model_energy_integrating}. 

Depending on the programming language (scripting versus compiled) and the amount of main memory available, any subset of the $\mathbf{r}$, $\mathbf{r}^{\prime}$, and $q$ loops can be vectorized. For large problems, where storing the system matrix $H$ is not practical, all the loops cannot be vectorized. For such problems, most of the computations need to be performed online and efficiently, with as many computations as possible reused in the code. In this paper, we have chosen to vectorize only the $\mathbf{r}^{\prime}$ and $q$ loops. As a consequence of the vectorization of the $q$ loop, the innermost momentum transfer loop is implemented as a matrix-vector product. Note that the same vectorization approach applies to the non-optimized and accelerated versions of the forward model.

\begin{algorithm}
\caption{Computational structure for forward model with no optimization}\label{alg:raw_forward_model_energy_integrating}
\begin{algorithmic}
\State Given object scatter density $f(\mathbf{r}, q)$
\For{\textbf{each} object point $\mathbf{r}$}
	\For{\textbf{each} detector point $\mathbf{r}^{\prime}$}
		\State Compute $G_{s o}(\mathbf{r})$
		\State Compute $G_{o d}(\mathbf{s})$		
		\State Compute $T(\mathbf{r}, \mathbf{r}^{\prime})$	
		\State Compute $\Delta\theta$	
		\State Compute scatter angle $\theta(\mathbf{r}, \mathbf{r}^{\prime})$
		\For{\textbf{each} momentum transfer $q$}
			\State Compute $S(\theta, q)$
			\State $W(\mathbf{r}, \theta) \pluseq  S(\theta, q) f(\mathbf{r}, q)$
		\EndFor
		\State $\begin{aligned} 
			g(\mathbf{r}^{\prime}) &\pluseq G_{s o}(\mathbf{r}) \times G_{o d}(\mathbf{s}) \times 	T(\mathbf{r}, \mathbf{r}^{\prime}) \\ 				
								   &\qquad \times \Delta\theta \times W(\mathbf{r}, \theta)
			   \end{aligned}$						
	\EndFor
\EndFor
\State $g(\mathbf{r}^{\prime}) \mathrel{{\times}{=}} C$
\State Return detector image $g(\mathbf{r}^{\prime})$
\end{algorithmic}
\end{algorithm}

Three ways to improve the efficiency of the code are (1) scatter angle interpolation, (2) exploiting geometric symmetries in computing the system factors, and (3) balancing online and offline computations. The result of applying these elements of efficiency to the non-optimized forward model is given in Algorithm~\ref{alg:efficient_forward_model_energy_integrating}. The algorithm shows the final form of the forward model computational structure, incorporating offline computation of the spectral, source-to-object geometry, and mask modulation factors, interpolation of scatter angles, translation symmetry, and left-right and up-down mirror symmetries. We found that the most significant improvement in computational time is due to scatter angle interpolation. Further details are described below.

\begin{algorithm*}[!htp]
\caption{Computational structure for forward model incorporating the proposed optimizations. $\mathbf{f}$ and $\mathbf{g}$ are the set of values in the object domain and detector corresponding to $f(\mathbf{r}, q)$ and $g(\mathbf{r}^{\prime})$, respectively. $x \bullet y$ denotes the inner product between vectors $x$ and $y$, $\odot$ represents element-wise multiplication. $\operatorname{fliplr}(\cdot)$, $\operatorname{flipud}(\cdot)$, and $\operatorname{vertcat}(\cdot, \cdot)$ are mathematical operators for reversing a matrix column-wise from left to right, row-wise from top to bottom, and for vertically concatenating two matrices, respectively. $U$ and $D$ are labels for the up and down halves of the detector array, whereas $L$ and $R$ are labels for the left and right sides of the object domain. For example, $T_{UL}(\mathbf{r})$ is the mask transmission values for scatter vectors from a spatial location $\mathbf{r}$ on the left side of the object domain to the pixels in the upper half of the detector array. $M$ and $N$ are the number of pixels in the detector array along the $z$ and $y$ directions, respectively. $G_{o d}(1: M/2, 1: N)$ represents a matrix of object-to-detector geometry factors from a given object voxel to half of the detector array along $z$, with $M/2$ rows and $N$ columns. $1: N$ is a list of numbers from $1$ to $N$ in increments of $1$. $\rho_y$ is the translation symmetry step size corresponding to the number of detector pixels along the $y$ direction equivalent to the width of an object pixel along $y$. $S(\theta, :)$ is a vector of spectral factors over all $q$ for a fixed scatter angle $\theta$ while $f(\mathbf{r}, :)$ is a vector of $f(\mathbf{r}, q)$ values over all $q$ for a fixed $\mathbf{r}$. $C$ is the normalization constant.}\label{alg:efficient_forward_model_energy_integrating}
\begin{algorithmic}[1] 
\State \textbf{Precompute} $G_{s o}(\mathbf{r})$ for object spatial locations $\mathbf{r}$ \label{lst:forw:on_off_1}
\State \textbf{Precompute} $S(\theta, q)$ for predefined $q$ and $\theta$ samples \label{lst:forw:sai_1}
\State \textbf{Precompute} four symmetry-based mask factor images: $T_{UL}(\mathbf{r})$, $T_{DL}(\mathbf{r})$, $T_{UR}(\mathbf{r})$, $T_{DR}(\mathbf{r})$\label{lst:forw:on_off_2}
\Procedure{$\mathbf{g} =$ Forward\_project}{$\mathbf{f}$}%
\For{\textbf{each} object location along $x$} \label{lst:forw:x_dir}
	\For{\textbf{each} object location along $y$ in the left half} \label{lst:forw:y_dir_lr_sym_1}
		\State $W_L(\theta) = S(\theta, :) \bullet f([x, y, 0], :)$ 
		\State $W_R(\theta) = S(\theta, :) \bullet f([x, -y, 0], :)$ 	
		\If{first location along the $y$ direction} \label{lst:forw:t_sym_1}
			\State \textbf{Compute} $G_{o d}(1:M/2, 1:N)$ \label{lst:forw:ud_sym_1}
		\Else
			\State \textbf{Update} $G_{o d}(1:M/2, \rho_y+1:N) = G_{o d}(1:M/2, 1:N-\rho_y)$ 
			\State \textbf{Recompute} $G_{o d}(1:M/2, 1:\rho_y)$ 
			\EndIf \label{lst:forw:t_sym_2}
			\State Compute $\Delta\theta$  
			\State Compute scatter angles $\theta(\mathbf{r}, \mathbf{r}^{\prime})$  
			\State \textbf{Interpolate} $W_L(\theta)$ and $W_R(\theta)$ and store in $W_L^{\text{interp}}$ and $W_R^{\text{interp}}$, respectively \label{lst:forw:sai_2} 
			\State \textbf{Fetch} $G_{s o}([x, y, 0])$ \label{lst:forw:on_off_3}
			\State \textbf{Fetch} $T_{UL}([x, y, 0])$, $T_{DL}([x, y, 0])$, $T_{UR}([x, -y, 0])$, and $T_{DR}([x, -y, 0])$ \label{lst:forw:on_off_4}
			\State $\mathbf{g}_{UL} \pluseq G_{s o}([x, y, 0]) \odot G_{o d} \odot T_{UL}([x, y, 0]) \odot \Delta\theta \odot W_L^{\text{interp}}$ 
			\State $\mathbf{g}_{DL} \pluseq G_{s o}([x, y, 0]) \odot G_{o d} \odot T_{DL}([x, y, 0]) \odot \Delta\theta \odot W_L^{\text{interp}}$ 
			\State $\mathbf{g}_{UR} \pluseq G_{s o}([x, y, 0]) \odot G_{o d} \odot T_{UR}([x, -y, 0]) \odot \Delta\theta \odot W_R^{\text{interp}}$
			\State $\mathbf{g}_{DR} \pluseq G_{s o}([x, y, 0]) \odot G_{o d} \odot T_{DR}([x, -y, 0]) \odot \Delta\theta \odot W_R^{\text{interp}}$ 
		\EndFor
\EndFor
\State $\mathbf{g}_U = \mathbf{g}_{UL} + \operatorname{fliplr}(\mathbf{g}_{UR})$ \label{lst:forw:lr_sym_2}
\State $\mathbf{g}_D = \mathbf{g}_{DL} + \operatorname{fliplr}(\mathbf{g}_{DR})$ \label{lst:forw:lr_sym_3}
\State $\mathbf{g} = C \odot \operatorname{vertcat}(\mathbf{g}_U, \operatorname{flipud}(\mathbf{g}_D))$ \label{lst:forw:ud_sym_2}
\EndProcedure
\end{algorithmic}
\end{algorithm*}

\subsection{Geometric Symmetries}
The object-to-detector geometry factor $G_{o d}(\mathbf{s})$ in Eq.~\ref{eq:forward_operator_energy_integrating} relies on the computation of the scatter vector $\mathbf{s} = \mathbf{r}^{\prime} - \mathbf{r}$, which has length $s = \left|\mathbf{s}\right|$ and the cosine factor at the detector surface $\left|\hat{\mathbf{n}}_d \cdot \hat{\mathbf{s}}\right|$, where $\hat{\mathbf{s}} = \frac{\mathbf{s}}{s}$. For the geometry illustrated in Fig.~\ref{fig:fanbeam_schematic}, the cosine factor equals the magnitude of the $x$ component of $\hat{\mathbf{s}}$. We now consider three different ways to exploit the symmetries present in the system geometry. We assume that the incident fan beam lies in a plane that is perpendicular to the detector plane and that the central ray of the fan beam intersects the center of the detector array, as shown in Fig.~\ref{fig:fanbeam_schematic}. To illustrate the symmetries, we replace the loop over $\mathbf{r}$ in Algorithm~\ref{alg:raw_forward_model_energy_integrating} with a loop over the $x$ and $y$ directions in the object domain, as shown in lines~\ref{lst:forw:x_dir} and \ref{lst:forw:y_dir_lr_sym_1} of Algorithm~\ref{alg:efficient_forward_model_energy_integrating}.

\subsubsection{Translation Symmetry}
For the system envisioned in this paper, there is a large array of small detector pixels. A typical detector may have an array of approximately 1500 by 2000 pixels, with pixel widths of 0.19 mm. One choice in the reconstruction is the selection of pixel widths in the object domain. These pixel widths should be motivated from a first principles analysis of achievable object resolution. However, some flexibility in the exact pixel width remains. Suppose that the displacement between object pixel centers in the $y$ direction (across the fan beam) is an integer multiple of the detector pixel width. For our study, we use a factor of 16 to get 16*0.19 = 3.04 mm width in the $y$ direction. Let $\mathbf{v}$ be a vector corresponding to the translation of a pixel in the object by one pixel width along the $y$ direction. Then, the equality (see Fig.~\ref{fig:translation_symmetry})
\begin{equation}
\nonumber
\left(\mathbf{r}^{\prime} + \mathbf{v}\right) - \left(\mathbf{r} + \mathbf{v}\right) = \left(\mathbf{r}^{\prime} - \mathbf{r}\right) = \mathbf{s}
\end{equation}
\noindent{implies that for each scatter vector from object location $\mathbf{r}$ to detector location $\mathbf{r}^{\prime}$, there is an identical scatter vector from object location $\mathbf{r} + \mathbf{v}$ to detector location $\mathbf{r}^{\prime} + \mathbf{v}$. Thus, given scatter vectors computed for one value of $y$ in the object, the scatter vectors for adjacent object locations are nearly all determined, with only scatter vectors corresponding to detector pixels near the edge of the detector array needing recomputation. This gives an efficient update for the scatter vector computation and consequently, the object-to-detector geometry factor $G_{o d}$ computation.}

Let us assume there are $M/2$ rows in each half of the detector array along the $z$-direction, $N$ detector columns ($y$ direction), and $\rho_y$ represents the number of $y$-directional detector pixels that correspond to 1 $y$-directional object pixel. Lines \ref{lst:forw:t_sym_1} to \ref{lst:forw:t_sym_2} in Algorithm~\ref{alg:efficient_forward_model_energy_integrating} show the modification of the non-optimized forward model computation due to translation symmetry. For the first location in the object along the $y$ direction, the object-to-detector geometry factor $G_{o d}$ is computed for all detector pixels under consideration. For the next object pixel along $y$, the current values of $G_{o d}$ starting from column $\rho_y+1$ to $N$ are identical to the previous values of $G_{o d}$ starting from the first column to the last but $\rho_y$ column. The current values of $G_{o d}$ for the first $\rho_y$ columns need to be recomputed, since they have no precomputed correspondence. From the algorithm, we see that previous object-to-detector geometry factor computations are re-used, and only a small fraction of the factor needs to be recomputed. For a 1500 by 2000 detector array and a translation symmetry step size of 16, the reduction in object-to-detector geometry factor computation is approximately 98.9\%. In practice, the actual time savings obtained depends on the difference between the time it takes to compute the object-to-detector geometry factor and the time it takes to read it from main memory.

\subsubsection{Left-Right Mirror Symmetry}
Another form of symmetry that we can exploit in the computation of the forward model is left-right mirror symmetry. Since the central ray of the fan beam intersects the center of the detector array, the $y$-coordinates of the focal point of the source and the center of the detector array are equal. We can consider an object reconstruction region whose center's $y$-coordinate is aligned with that of the source and detector array. For such an object region, we can select an even number of pixels along the $y$ direction. With this choice, the scatter vectors from the left half of the object region are a mirror reflection of those on the right half of the region. The scatter angles and magnitude of the scatter vectors are equal for both halves of the object region. 

As a result of the left-right mirror symmetry, we only need to compute the scatter angles $\theta$, geometric factors $G_{s o}$, $G_{o d}$, and $\Delta\theta$, and the spectral factor $S(\theta, q)$ for one half of the object region, giving a factor of two speedup in computation, barring the costs of computing the mask transmission geometric factor $T$. Line \ref{lst:forw:y_dir_lr_sym_1} in Algorithm~\ref{alg:efficient_forward_model_energy_integrating} signals the beginning of exploiting left-right mirror symmetry. Lines \ref{lst:forw:lr_sym_2} and \ref{lst:forw:lr_sym_3} are needed to combine the left-right symmetric results of the forward model. 

\subsubsection{Up-Down Mirror Symmetry}
In addition to left-right mirror symmetry, further improvements in the computational efficiency of the forward model are possible. Since the central ray of the fan beam intersects the center of the detector array, the $z$-coordinate of the focal point of the source equals that of the detector array center. To simplify the discussion, we assume the detector array has an even number of pixels along $z$ (see Fig.~\ref{fig:OS_choice}), so that the $z$-coordinate of the center lies at the border between adjacent pixels. Consequently, the scatter vectors from the object to the top portion of the detector array are reflections of those to the bottom half. 

Since only half of the detector arrays are used in the computation of the scatter angle, geometric factors $G_{s o}$, $G_{o d}$, and $\Delta\theta$, and the spectral factor $S$, a further factor of two speedup in computational may be achieved, ignoring the cost of computing the mask factor $T$. The use of up-down mirror symmetry in Algorithm~\ref{alg:efficient_forward_model_energy_integrating} begins with line~\ref{lst:forw:ud_sym_1}. Line \ref{lst:forw:ud_sym_2} is needed to combine the top and bottom halves of the results of the forward model.

One of the difficulties with left-right and up-down mirror symmetries is that they are easily violated in practice. The fan beam may not lie in a plane perpendicular to the detector plane and the central ray of the fan beam may not intersect the center of the detector array. If these deviations are minor, rotation and interpolation may be utilized to reinstate the perpendicularity between the fan beam and detector planes. Moreover, the detector plane measurements may be transformed by shifting the measurement window along $y$ or $z$ so as to align the source and detector array center, with potential data loss due to cropping. In addition, if the object is not located around the central ray of the fan beam, an excess amount of pixels may be needed to cover the illuminated slice of the object. Given the potential significant speedup in the computation, care should be taken in the design and calibration of the coherent scatter imaging system to ensure that the assumptions for left-right and up-down mirror symmetries hold.

Speedups due to left-right and up-down mirror symmetries may be obtained in the computation of the mask modulation geometry factor $T$. This computational savings hinges on the fact that the secondary mask is symmetric about the central ray of the fan beam. However, this may interfere with the principles involved in the design of the secondary mask. In essence, the computational choices affect the design of the coherent scatter imaging system. If we anticipate a computationally efficient forward model which includes online computation of the mask modulation geometry factor, then the secondary mask should be designed to allow for some or all of the mirror symmetries. On the other hand, given a fixed secondary mask, the algorithm should be designed to accommodate the potential absence of mirror-symmetric secondary masks, as we will see later. This algorithmic choice is directly influenced by the system design.

\subsection{Scatter Angle Interpolation}
To accelerate the computation of the forward model in Eq.~\ref{eq:forward_model_energy_integrating_q_integrate}, the spectral factor $S(\theta, q)$ is precomputed by using a set of predefined uniformly sampled scatter angles. We then approximate the spectral factor at other scatter angles by interpolating.  Given that the set of momentum transfer values is also predefined, the spectral factor matrix $S(\theta, q)$ can be precomputed. To get the approximate values of $S(\theta, q)$ used in the $q$-loop, nearest neighbor interpolation over $\theta$ may be utilized. 

An example of using scatter angle interpolation to approximate the true values of $S(\theta, q)$ is shown in Fig.~\ref{fig:source_factor_matrix_slice_energy_integrating}. The curves correspond to the true and interpolated spectral factors for a filtered 125 kVp source at $q = 0.2$\si{\angstrom}$^{-1}$. 250 uniformly sampled scatter angles from 0.2 to $\pi$/6 radians are used in generating the interpolated spectral factors. From the figure, we can see that the spectral factor is smooth and slowly varying with $\theta$, except in areas surrounding the characteristic peaks of the x-ray source. This suggests that interpolation will lead to fairly accurate values of the spectral factor, for a standard polychromatic x-ray source. Since the effective spectral factor $W$ is a linear combination of spectral factors, it can also be adequately approximated by interpolation.

Note that such practical polychromatic sources introduce blurring in the resulting scatter angles, for each momentum transfer value. That is, according to Bragg's equation (see Eq.~\ref{eq:bragg}), a monochromatic source would yield a single scatter angle for each momentum transfer. However, for a polychromatic source, each momentum transfer has a range of scatter angles that result, of known intensity. For a monochromatic source, the spectral factor is no longer slowly varying, so that interpolation is no longer a valid strategy. However, for such a source, each value of $q$ will have a corresponding unique value of $\theta$ associated with it, making acceleration techniques such as interpolation, unnecessary. 

\begin{figure}%
\includegraphics[width = \linewidth]{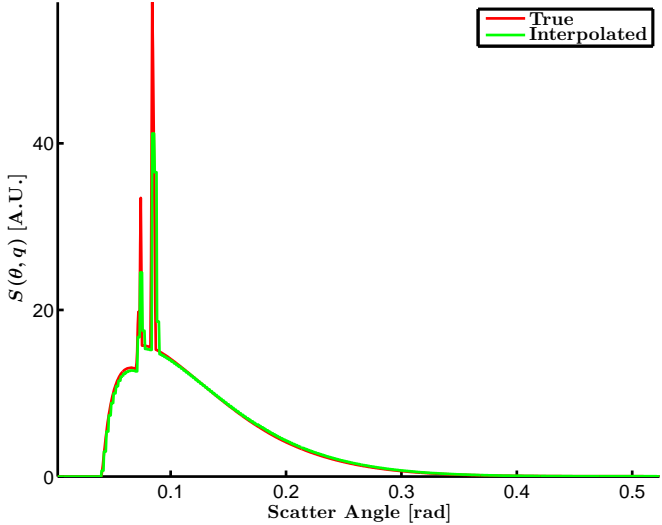}%
\caption{True and angle-interpolated spectral factors $S(\theta, q)$ that transforms object scatter density as a function of momentum transfer to object scatter density as a function of scatter angle. The spectral factors were obtained using a filtered 125 kVp source at $q = 0.2$\si{\angstrom}$^{-1}$.}%
\label{fig:source_factor_matrix_slice_energy_integrating}%
\end{figure}

Lines~\ref{lst:forw:sai_1} and \ref{lst:forw:sai_2} of Algorithm~\ref{alg:efficient_forward_model_energy_integrating} show the changes to the non-optimized forward model computation due to scatter angle interpolation. From the algorithm, we can see that the spectral factor $S(\theta, q)$ is precomputed and the effective spectral factor $W$ is interpolated. The precomputation of the spectral factor matrix and the interpolation of scatter angles avoids the re-computation of the spectral factor for each $(q, \theta)$ pair.

\subsection{Online-Offline Computations}
The choice of which factors of the forward model to compute online or offline depends on the amount of main memory available, the cost of computing the factor online, and the speed of loading the factor from main memory. If the factor is too large to fit in main memory, then part or all of the factor should be computed online. Moreover, if the cost of computing the factor is equivalent to the speed of loading the factor from main memory, then the computations should be performed online.

Part of the choice in online computations involves the mask modulation geometry factor. If this factor has desirable symmetry properties as outlined above, then many aspects of its use may be computed online. When the mask is completely determined through experimental measurements, then a lot of the potential symmetries break down. In addition, the types of masks that we have considered for the fan beam Monte Carlo study do not have these desirable symmetries. Taking these observations into account, we have chosen to precompute the binarized mask factors offline and load them into our code at run-time. To accommodate larger reconstructed object regions, larger detector arrays, and/or smaller main memory sizes, the mask factors may need to be computed online.

The source-to-object geometry factor $G_{s o}(\mathbf{r})$, can be precomputed and stored. Storing $G_{s o}(\mathbf{r})$ requires a very small amount of memory, since it is indexed only by the number of object spatial pixels. On the other hand, precomputing the object-to-detector geometry factor $G_{o d}(\mathbf{s})$ will require a much larger amount of memory for the fan beam system under consideration, since it is indexed by both the object spatial and detector pixels. An alternative to precomputing all of $G_{o d}(\mathbf{s})$ is interpolation between a subset of precomputed $G_{o d}(\mathbf{s})$ or precomputing only a fraction of the factors and computing the rest online, but we do not pursue these avenues in this paper. Note that storing the mask modulation geometry factor $T$ requires far less memory than the object-to-detector geometry factor since it is binary. Lines \ref{lst:forw:on_off_1}, \ref{lst:forw:on_off_2}, \ref{lst:forw:on_off_3}, \ref{lst:forw:on_off_4}, in Algorithm~\ref{alg:efficient_forward_model_energy_integrating} show changes to the computational structure of the forward model due to offline computations. The mask modulation geometry factor is computed in four parts corresponding to the transmission of scattered photons from either half of the object domain (left or right) to either half of the detector array (up or down), through the secondary mask. For example, $T_{UL}(\mathbf{r})$ is the mask transmission values for scatter vectors from a spatial location $\mathbf{r}$ on the left side of the object domain to the pixels in the upper half of the detector array.

\section{Computation of the Backward Model} \label{sec:computation_backward_model}
A non-optimized computational structure for the backward model is given in Algorithm~\ref{alg:raw_backward_model_energy_integrating}.

\begin{algorithm}
\caption{Computational structure for backward model with no optimization}\label{alg:raw_backward_model_energy_integrating}
\begin{algorithmic}
\State Given detector image $\mathbf{g}(\mathbf{r}^{\prime})$
\For{\textbf{each} object point $\mathbf{r}$}
	\For{\textbf{each} momentum transfer $q$}
		\For{\textbf{each} detector point $\mathbf{r}^{\prime}$}				
			\State Compute $G_{s o}(\mathbf{r})$
			\State Compute $G_{o d}(\mathbf{s})$		
			\State Compute $T(\mathbf{r}, \mathbf{r}^{\prime})$	
			\State Compute $\Delta\theta$
			\State Compute scatter angle $\theta(\mathbf{r}, \mathbf{r}^{\prime})$ 
			\State Compute $S(\theta, q)$
			\State $\begin{aligned} f(\mathbf{r}, q) &\pluseq G_{s o}(\mathbf{r}) \times G_{o d}(\mathbf{s}) \times T(\mathbf{r}, \mathbf{r}^{\prime}) \\
						&\qquad \times \Delta\theta \times S(\theta, q) \times g(\mathbf{r}^{\prime})
						\end{aligned}$	
		\EndFor	
	\EndFor
\EndFor
\State $f(\mathbf{r}, q) \mathrel{{\times}{=}} C$
\State Return object image $f(\mathbf{r}, q)$
\end{algorithmic}
\end{algorithm}

An efficient implementation of the backward model is paramount for an overall efficient iterative algorithm for object scatter density estimation. As was the case for the forward model, the geometry and spectral factors offer several opportunities for efficient computation. The scatter angle interpolation, symmetry classes, and online-offline trade-off identified for the efficient computation of the forward model can be easily incorporated into an efficient computation of the backward model. Algorithm~\ref{alg:efficient_backward_model_energy_integrating} gives such an implementation. Note that 

\begin{algorithm*}[!htp]
\caption{Computational structure for backward model incorporating the proposed optimizations. See caption of Algorithm~\ref{alg:efficient_forward_model_energy_integrating} for notation.}
\label{alg:efficient_backward_model_energy_integrating}
\begin{algorithmic}
\State \textbf{Precompute} $G_{s o}(\mathbf{r})$ for object spatial locations $\mathbf{r}$
\State \textbf{Precompute} $S(\theta, q)$ for predefined $q$ and $\theta$ samples
\State \textbf{Precompute} four symmetry-based mask factor images: $T_{UL}(\mathbf{r})$, $T_{DL}(\mathbf{r})$, $T_{UR}(\mathbf{r})$, $T_{DR}(\mathbf{r})$
\Procedure{$\mathbf{f} =$ Back\_project}{$\mathbf{g}$}%
\State \textbf{Extract} four symmetry-based detector images from $\mathbf{g}$: $\mathbf{g}_{UL}$, $\mathbf{g}_{DL}$, $\mathbf{g}_{UR}$, $\mathbf{g}_{DR}$
\For{\textbf{each} object location along $x$}
	\For{\textbf{each} object location along $y$ in the left half}		
		\If{first location along the $y$ direction}
			\State \textbf{Compute} $G_{o d}(1:M/2, 1:N)$ 
		\Else
			\State \textbf{Update} $G_{o d}(1:M/2, \rho_y+1:N) = G_{o d}(1:M/2, 1:N-\rho_y)$ 
			\State \textbf{Recompute} $G_{o d}(1:M/2, 1:\rho_y)$
		\EndIf
		\State Compute $\Delta\theta$
		\State \textbf{Fetch} $G_{s o}([x, y, 0])$ 
		\State \textbf{Fetch} $T_{UL}([x, y, 0])$, $T_{DL}([x, y, 0])$, $T_{UR}([x, -y, 0])$, and $T_{DR}([x, -y, 0])$ 		
		\State $\mathbf{g}_L = G_{s o}([x, y, 0]) \odot G_{o d} \odot \Delta\theta \odot (T_{UL}([x, y, 0]) \odot \mathbf{g}_{UL} + T_{DL}([x, y, 0]) \odot \mathbf{g}_{DL})$
		\State $\mathbf{g}_R = G_{s o}([x, y, 0]) \odot G_{o d} \odot \Delta\theta \odot (T_{UR}([x, -y, 0]) \odot \mathbf{g}_{UR} + T_{DR}([x, -y, 0]) \odot \mathbf{g}_{DR})$		
		\State Compute scatter angles $\theta(\mathbf{r}, \mathbf{r}^{\prime})$
		\For{\textbf{each} momentum transfer $q$}
			\State \textbf{Interpolate} $S(\theta, q)$ and store in $S^{\text{interp}}$					
			\State $f([x, y, 0], q) = C \times \left(\mathbf{g}_L \bullet S^{\text{interp}}\right)$
			\State $f([x, -y, 0], q) = C \times \left(\mathbf{g}_R \bullet S^{\text{interp}}\right)$
		\EndFor
	\EndFor
\EndFor
\EndProcedure
\end{algorithmic}
\end{algorithm*}

\noindent unlike in Algorithm~\ref{alg:efficient_forward_model_energy_integrating} where the scatter angle interpolation was performed on the effective spectral factor $W$ after integrating out the $q$ dimension, the scatter angle interpolation of the spectral factor $S$ in Algorithm~\ref{alg:efficient_backward_model_energy_integrating} occurs independently for each $q$. This results in the backward model being slower than the forward model. 

In the next section, we setup the optimization problem in which the forward and backward models play a crucial role in the recovery of the underlying image.

\section{Image Recovery Problem Description} \label{sec:image_recovery_problem_desc}
The measurements $y_i$ from an x-ray coherent scatter imaging system are modeled as independent Poisson distributed random variables

\begin{equation}
\label{eq:poisson_model}
y_i \sim \mbox{Poisson}(\sum_{j = 1}^J H_{i, j} f_j + r_i), i = 1, \dots, I
\end{equation}
\noindent{where $I$ is the number of measurements, $J$ is the number of image voxels. $H \in \mathbb{R}_+^{I \times J}$ is the system matrix (discretized forward model) with $H_{i, j}$ denoting the $ij$\ts{th} entry. The column vector $\mathbf{f} \in \mathbb{R}_+^J$ is the lexicographic ordering of the hyperspectral image to be recovered, with $f_j$ denoting the $j$\ts{th} entry. $\mathbf{r} \in \mathbb{R}_+^I$ are the known background measurements, with the $i$\ts{th} entry denoted by $r_i$.}  Let $J = B \times Q$, where $B$ is the number of spatial bins in the image and $Q$ is the number of spectral (momentum transfer) channels.

We consider a penalized Poisson negative log-likelihood function of the form 

\begin{equation}
\label{eq:overall_objective}
J(\mathbf{f}) = L(\mathbf{f}) + \beta R(\mathbf{f}),
\end{equation}
\noindent{where $R(\mathbf{f})$ is the regularizer, $\beta > 0$ is the regularization coefficient, and}

\begin{equation*}
L(\mathbf{f}) = \sum_{i = 1}^{I} d_i(l_i)
\end{equation*}
\noindent{is the negative log-likelihood function, with}

\begin{equation*}
d_i(l) = \left(l + r_i\right) - y_i \ln\left(l + r_i\right) + \ln\left(y_i!\right)
\end{equation*}
\noindent{and $l_i = \sum_{j = 1}^J H_{i, j} f_j$, $i = 1, \dots, I$.}

A standard edge-preserving regularizer, with independent spectral bins, is chosen for $R(\mathbf{f})$ and is given by 
\begin{equation}
\label{eq:standard_EPR}
R(\mathbf{f}) = \sum_{j = 1}^{J} \sum_{k \in \mathcal{N}_j} w_{j, k} \psi_{\delta}\left(f_j - f_k\right),
\end{equation}
\noindent{where $\psi_{\delta}\left(\cdot\right)$ is the edge-preserving potential function, with scale parameter $\delta$, which is symmetric, convex, and possesses desirable smoothness properties \cite{Huber1981, Lange1990, Green1990}, $\mathcal{N}_j$ is the set of neighbors of the $j$\ts{th} image voxel, and $w_{j, k}$ is a neighborhood weight to compensate for different physical units of the spectral and spatial dimensions, and different voxel sizes in each dimension.} We assume a spatially piecewise smooth object. In general, the momentum transfer profile (MTP) of a material may not be piecewise smooth, so that only spatial neighbors are permitted in $\mathcal{N}_j$ for each image voxel.

The constrained convex optimization problem of interest is then

\begin{equation}
\label{op:orig_problem}
\begin{aligned}
& \minimize_{\mathbf{f}} & & J(\mathbf{f}) \\
& \text{subject to} & & \mathbf{f} \geq \mathbf{0}.
\end{aligned}
\end{equation}

\section{Reconstruction Algorithm} \label{sec:reconstruction_algorithm}
To solve optimization problem \ref{op:orig_problem}, we consider a sequence of simpler optimization problems obtained by lifting the objective function around the previous image estimate. In particular, we obtain a surrogate objective function which is fully separable with respect to the image parameters. This choice allows us to utilize the optimized code for online forward and backward models computation. Algorithm~\ref{alg:image_update_EM_type} shows the steps involved in solving the optimization problem in \eqref{op:orig_problem} using the sequence of convex optimization problems given in \eqref{op:modified_problems} as detailed in Appendix~\ref{app:reconstruction_algorithm_details}.

\begin{algorithm}[!ht]
\caption{EM-type image reconstruction algorithm. $\textproc{Forward\_project}(\cdot)$ and $\textproc{Back\_project}(\cdot)$ are obtained from Algorithms~\ref{alg:efficient_forward_model_energy_integrating} and \ref{alg:efficient_backward_model_energy_integrating}, respectively. $\boldsymbol{1}$ is a vector of ones. $\oslash$ denotes element-wise division.}
\label{alg:image_update_EM_type}
\begin{algorithmic}
\State Given the measured data $\mathbf{y}$ and background $\mathbf{r}$
\State Initialize the neighborhood structure $\boldsymbol{\mathcal{N}} = \left\{\mathcal{N}_j\right\}$, $\mathbf{w} = \left\{w_{j, k}\right\}$ 
\State Initialize the image $\mathbf{f}^{0}$
\State Precompute $\mathbf{b}^{(1)} = \textproc{Back\_project}(\boldsymbol{1})$ 
\For{$t=0$ \textbf{to} $T-1$}
	\State Compute $\mathbf{z} = \textproc{Forward\_project}(\mathbf{f}^t)  + \mathbf{r}$ 
	\State Compute $\mathbf{b}^{(2)} = \textproc{Back\_project}(\mathbf{y} \oslash \mathbf{z})$ 
	\State Compute $\mathbf{f}^{t + 1}$ using Eq.~\ref{eq:f_update}
\EndFor
\end{algorithmic}
\end{algorithm}

To accelerate the convergence rate of the EM-type algorithm, we employ ordered subsets, a range decomposition (measurement space partition) method. Algorithm~\ref{alg:image_update_OSEM_type} shows the structure of the ordered subset EM-type algorithm. There are several choices for the measurement space partition. One particular choice preserves the symmetry classes identified in Section~\ref{sec:computation_forward_model}. To preserve left-right mirror symmetry, when a detector pixel belongs to a subset, the pixel corresponding to its mirror reflection about the line $y = 0$ should belong to the same subset. Moreover, to preserve up-down mirror symmetry, when a detector pixel belongs to a subset, the pixel corresponding to its mirror reflection about the line $z = 0$ should belong to the subset. In order to satisfy both left-right and up-down mirror symmetries, if one detector pixel is in a subset, then its three mirror reflections must also be included in that subset. In addition, to utilize translation symmetry to reduce computation, pixels that are $\rho_y$ pixels away (along the $y$ direction) from each of the four symmetric pixels (original pixel + 3 reflection pixels) must also be included in the subset. 

If we consider the horizontal and vertical indexes of the detector pixels, the vertical indexes can be used to satisfy up-down symmetry constraints while the horizontal indexes can be used to address left-right and translation symmetry concerns. Due to up-down mirror symmetry, we can consider partitioning half (up or down) of the vertical detector pixel indexes. The partitioning of the other half follows directly from up-down mirror reflections. For a detector array with $M$ rows, we can partition either the top or bottom $M/2$ rows using any strategy. We have chosen to select members of a subset by using a fixed step size $\rho_z$. To balance the vertical subsets, $\rho_z$ should be a factor of $M/2$. The detector pixels with vertical indexes $m: \rho_z: M/2$ and $M-m+1: -\rho_z: M/2+1$ belong to the same subset, where $1\leq m\leq \rho_z$ is the smallest vertical index in the subset. Note that we have used MATLAB's listing notation $a: b: c$ to mean numbers from $a$ to $c$ in steps of $b$, $a$ inclusive. 

In order to satisfy left-right symmetry constraints, the same strategy used for the vertical indexes can be used for the horizontal indexes. However, to also satisfy translation symmetry constraints, extra precaution must be taken. The difference between the horizontal indexes of detector pixels that belong to the same subset must be a multiple of $\rho_y$. For a detector array with $N$ columns, satisfying translation symmetry implies that the detector pixels with horizontal indexes $n: \rho_y: N$ belong to the same subset, where $1\leq n\leq \rho_y/2$ is the smallest horizontal index in the subset. In addition, to satisfy left-right mirror symmetry, the detector pixels with horizontal indexes $N-n+1: -\rho_y: 1$ also belong to that subset. To balance the horizontal subsets, $\rho_y$ should be a factor of $N$ and even.

\begin{figure*}[!ht]%
\includegraphics[width = \linewidth]{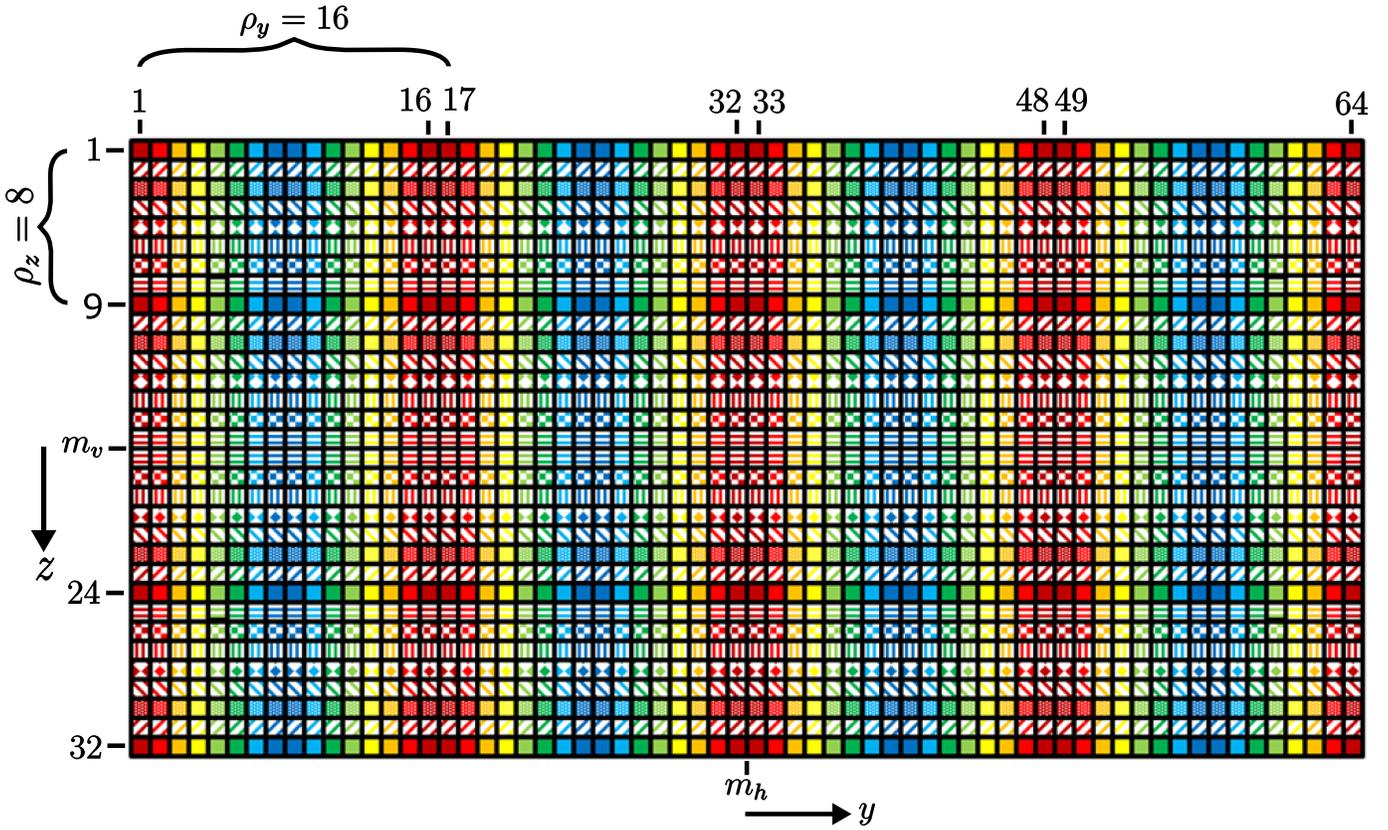}%
\caption{Layout of the symmetry-preserving choice of 64 subsets based on a mini detector array with 32 rows and 64 columns. The translation symmetry step size is $\rho_y = 16$. $m_h$ and $m_v$ represent the horizontal and vertical midpoints of the detector array and correspond to the lines $z = 0$ and $y = 0$, respectively. Pixels with the same combination of color and pattern belong to the same subset.}%
\label{fig:OS_choice}%
\end{figure*}

\begin{algorithm}[!ht]
\caption{Ordered subset expectation-maximization type image reconstruction algorithm. $\textproc{Forward\_project}_p(\cdot)$ and $\textproc{Back\_project}_p(\cdot)$ are obtained from Algorithms~\ref{alg:efficient_forward_model_energy_integrating} and  \ref{alg:efficient_backward_model_energy_integrating}, respectively, by utilizing the appropriate partition of the measurement (detector) space. $\boldsymbol{1}$ is a vector of ones. $\oslash$ denotes element-wise division.}
\label{alg:image_update_OSEM_type}
\begin{algorithmic}
\State Given the measured data $\mathbf{y}$ and background $\mathbf{r}$
\State Partition $\mathbf{y}$ and $\mathbf{r}$ into $P$ disjoint subsets $\left\{\mathbf{y}_p\right\}$ and $\left\{\mathbf{r}_p\right\}$
\State Initialize the neighborhood structure $\boldsymbol{\mathcal{N}} = \left\{\mathcal{N}_j\right\}$, $\mathbf{w} = \left\{w_{j, k}\right\}$ 
\State Initialize the image $\mathbf{f}^{0}$
\State Precompute $\mathbf{b}^{(1)}_p = \textproc{Back\_project}_p(\boldsymbol{1})$, $\forall p = 1, \ldots, P$
\For{$t=0$ \textbf{to} $T-1$}
	\State $\mathbf{f}^{t, 0} = \mathbf{f}^{t}$
	\For{$p=1$ \textbf{to} $P$}
		\State Compute $\mathbf{z}_p = \textproc{Forward\_project}_p(\mathbf{f}^{t, p-1})  + \mathbf{r}_p$ 
		\State Compute $\mathbf{b}^{(2)}_p = \textproc{Back\_project}_p(\mathbf{y}_p \oslash \mathbf{z}_p)$
		\State Compute $\mathbf{f}^{t, p}$ using Eq.~\ref{eq:f_update}
	\EndFor
	\State $\mathbf{f}^{t + 1} = \mathbf{f}^{t, P}$
\EndFor
\end{algorithmic}
\end{algorithm} 

For illustration, we consider the partitioning of a scaled-down version of the detector array with 32 rows and 64 columns. The partitions shown in Fig.~\ref{fig:OS_choice} satisfy the requirements for preserving symmetries, when the translation symmetry step size is $\rho_y = 16$ and the fixed step size in the vertical direction is 8 pixels. In the figure, $m_h$ and $m_v$ represent the horizontal and vertical midpoints of the detector array, corresponding to the lines $z = 0$ and $y = 0$, respectively. Pixels with the same combination of color and pattern belong to the same subset. Using a fixed step size of 8 pixels gives 8 different subsets along the vertical direction. The choice of 16 as the translation symmetry step size gives a total of 8 (= 16/2) horizontal subsets. This gives a total of 64 subsets as shown in Fig.~\ref{fig:OS_choice}. For example, in each row, the pixels with horizontal indexes (1, 17, 33, 49) and (64, 48, 32, 16) belong to the same subset. Also, in each column, the pixels with vertical indexes (1, 9) and (32, 24) belong to the same subset. The step sizes that were used for partitioning the mini detector array were applied to the full detector array used in the simulations that follow.

\section{Results} \label{sec:results}
The forward and backward models, integral parts of the OSEM-type iterative algorithm for estimating object scatter density (see Algorithm~\ref{alg:image_update_OSEM_type}), were validated on simulated analytic and Monte Carlo data. The subsets were chosen in the same way as the illustrative example in the previous section. For both simulations, the source was located at the origin and the center of the flat-panel energy-integrating detector array was 1546.5 mm away along the positive x-axis. The source was operated at 125 kVp, and the spectrum was filtered by 0.5 mm of aluminum, before being shaped into a fan by the primary aperture (slit). The secondary mask was placed 100 mm in front of, and parallel to, the plane of the detector array. The detector array had 1536 rows and 2048 columns, with a pixel pitch of 0.19 mm in both $z$ and $y$ directions. The center of the reconstructed object was located 1035 mm from the source, along the positive x-axis. A region of 70 mm by 85.12 mm, in the $xy$-plane, was reconstructed, with a pixel pitch of $\Delta x = 2.5$ mm and $\Delta y = 3.04=(16 \times 0.19)$ mm. 79 evenly spaced momentum transfer bins from 0.01 to 0.4 \si{\angstrom}$^{-1}$ were used. There were $N_{\theta}$ scatter angle samples chosen uniformly from 0 to $\pi/6$ radians, excluding 0. The secondary mask shown in Fig.~\ref{fig:secondary_mask_MURI} was used in both the analytic forward model and Monte Carlo simulated data. Since the mask is not amenable to the mirror symmetry classes, the four symmetric mask modulation factors are precomputed, as stated in Algorithm~\ref{alg:efficient_forward_model_energy_integrating}.

\subsection{Analytic Simulation} \label{sec:analytic_results}
The analytic forward model was used to model an object containing two vials of strong scatterers in close proximity, illustrated in Fig.~\ref{fig:simulated_spatial_dist_vial_configuration}. Rectangular vials of sodium chloride (NaCl) and aluminum (Al) crystalline powder were placed along the direction of the fan beam's central ray. Each vial occupies a rectangular region 10 mm by 12.16 mm in size. The vials are separated by 20 mm along the positive x-axis.

The object scatter density was simulated by inserting each material's momentum transfer profile at the appropriate spatial location. The momentum transfer profile of each material was obtained experimentally by using an x-ray diffractometer \cite{Bruker} and then interpolated to match the sampling of the momentum transfer space. The resulting momentum transfer profiles (MTPs) of NaCl and Al are shown as the red reference curves in Figs~\ref{fig:NaCl_profile_vial_configuration} and \ref{fig:Al_profile_vial_configuration}, respectively. The spatial distribution of the simulated object ($\tilde{f}(x, y) = \int f(x, y, q) \, dq$), obtained by summing over all the momentum transfer channels of the object scatter density, is shown in Fig.~\ref{fig:simulated_spatial_dist_vial_configuration}. The non-optimized forward model shown in Algorithm~\ref{alg:raw_forward_model_energy_integrating} was applied to the object scatter density in creating the noise-less scatter data, to prevent the simulated data from being corrupted by scatter angle interpolation, which is utilized by the optimized forward model shown in Algorithm~\ref{alg:efficient_forward_model_energy_integrating}. Poisson noise was later introduced with a maximum photon count of 50, across all detector pixels. The simulated noisy data is given in Fig.~\ref{fig:simulated_data_vial_configuration}.

Next, we characterize the computational-time speedup of the forward and backward models introduced by each element of the proposed optimization described in Section~\ref{sec:computation_forward_model}, relative to the non-optimized models in Algorithms~\ref{alg:raw_forward_model_energy_integrating} and \ref{alg:raw_backward_model_energy_integrating}. The analytic simulated data and object was used for this characterization.

To characterize the speedup in computational time due to each element of the proposed optimization discussed in Section~\ref{sec:computation_forward_model}, the variants of the forward model were applied to the simulated vial object, whereas those of the backward model were applied to the noise-less simulated data. Both full-data and ordered-subsets implementations of the forward and backward models were tested. For the ordered-subset implementations, the models were iterated over all the subsets. The forward and backward models were implemented in MATLAB\textsuperscript{\textregistered} R2015a, on a dual processor, 6 cores per processor, 128 GB Windows machine. Table~\ref{tab:computational_time_speedups} shows the time taken to apply each model once on the appropriate data and the speedup in computational time relative to the time taken by the non-optimized model. The speedup is computed as the ratio of the computational time of an algorithm to the computational time of the non-optimized version. The table also shows the error associated with each element of optimization relative to the non-optimized version. The results for the ordered-subsets implementations are given to the right of the results for the full-data implementations. For the models that utilize scatter angle interpolation, the number of scatter angle bins used in interpolation is $N_{\theta} = 250$.

\definecolor {gray1}{gray}{0.8}
\definecolor {gray2}{gray}{1}

\begin{table*}[!ht]
\setlength\arrayrulewidth{1pt}
\centering
\caption{Computational time, speedup, and relative error for different levels of optimization of the forward and backward models. Speedup is defined as the ratio of the computational time of an algorithm to the computational time of the non-optimized version. MATLAB\textsuperscript{\textregistered}'s tic-toc functions were used in timing the variants of the forward and backward models. The number of scatter angle bins used in interpolation is $N_{\theta} = 250$. The NRMSE is computed relative to the results (in the measurement or object space) of the non-optimized model. To obtain the NRMSE, the RMSE is normalized by the square root of the mean of the square of the entries of the result obtained using the non-optimized model. The results for the ordered-subsets implementations are given to the right of the results for the full-data implementations. Note that the cumulative effect of all the optimizations on the speedup is not linear.}
\label{tab:computational_time_speedups}
\begin{tabular}{p{1.5cm}||cc||cc||cc||cc||cc||cc}
\toprule 
\multirow{3}{*}{Optimization} & \multicolumn{12}{c}{Model}\tabularnewline
 & \multicolumn{6}{c}{Forward} & \multicolumn{6}{c}{Backward}\tabularnewline
 & \multicolumn{2}{c}{Time (s)} & \multicolumn{2}{c}{Speedup} & \multicolumn{2}{c}{NRMSE (\%)} & \multicolumn{2}{c}{Time (s)} & \multicolumn{2}{c}{Speedup} & \multicolumn{2}{c}{NRMSE (\%)} \tabularnewline
 & FDI & OSI & FDI & OSI & FDI & OSI & FDI & OSI & FDI & OSI & FDI & OSI \tabularnewline
\midrule 
\midrule 
\rowcolor{gray1}NO  & 10516.88 & 9466.80 & \textbf{1.00} & \textbf{1.00} & 0.00 & 0.00 & 10512.32 & 9365.06 & \textbf{1.00} & \textbf{1.00} & 0.00 & 0.00 \tabularnewline
\midrule[\heavyrulewidth]
\rowcolor{gray2}SAI & 319.45 & 240.24 & 32.92 & 39.41 & 6.20 & 6.20 & 1191.05 & 1330.34 & 8.83 & 7.04 & 0.67 & 0.67 \tabularnewline
\rowcolor{gray1}TS   & 10385.48& 9434.27 & 1.01 & 1.00 & 0.00 & 0.00 & 10438.93 & 9472.91 & 1.01 & 0.99 & 0.00 & 0.00 \tabularnewline
\rowcolor{gray2}LRMS & 5271.56 & 4822.61 & 2.00 & 1.96 & 0.00 & 0.00 & 5317.47 & 4763.28  & 1.98 & 1.97 & 0.00 & 0.00 \tabularnewline
\rowcolor{gray1}UDMS & 4993.05 & 4860.33 & 2.11 & 1.95 & 0.00 & 0.00 & 4975.95 & 4791.84  & 2.11 & 1.95 & 0.00 & 0.00 \tabularnewline
\rowcolor{gray2}OOT  & 10353.90 & 9525.87 & 1.02 & 0.99 & 0.00 & 0.00 & 10315.77 & 9441.22 & 1.02 & 0.99 & 0.00 & 0.00 \tabularnewline
\rowcolor{gray1}AO  & 61.04 & 64.70 & \textbf{172.30} & \textbf{146.32} & 6.20 & 6.20 & 287.40 & 288.98 & \textbf{36.58} & \textbf{32.41} & 0.67 & 0.66 \tabularnewline
\bottomrule
\end{tabular}
\vspace{4 mm}

\raggedright
NO = No Optimization; SAI = Scatter Angle Interpolation; TS = Translation Symmetry; LRMS = Left-Right Mirror Symmetry; UDMS = Up-Down Mirror Symmetry; OOT = Online-Offline Trade-off; AO = All Optimizations; NRMSE = Normalized RMSE; RMSE = Root Mean Square Error; FDI = Full-Data Implementation; OSI = Ordered-Subsets Implementation
\end{table*}


From the table, we see that the greatest improvement in computational time is due to scatter angle interpolation; it gives an improvement of a factor of about 33 (39 for OS) and 9 (7 for OS) in the application of the forward and backward model, respectively, relative to the non-optimized models. The next largest improvement in computational time is due to either left-right or up-down mirror symmetry, which give an improvement of a factor of about 2 in both models. The improvement due to translation symmetry and online-offline trade-off are marginal at best. The forward and backward models are accelerated by a factor of approximately 172 (146 for OS) and 37 (32 for OS), respectively, by utilizing all the elements of our optimized algorithms. Note the non-linearity in speedup of the cumulative effect of the elements of our proposed optimization. The scatter angle interpolation introduces errors of approximately 6\% and 1\% in the computation of the forward and backward models, respectively, relative to the non-optimized algorithms. These relative errors can be reduced by increasing the number of scatter angle samples used for the interpolation or using a higher order interpolation. However, the error reduction comes at the cost of increased computational times. When $N_{\theta} = 2000$, the times to compute the full forward and backward models are 65.28 s (78.40 s for OS) and 289.62 s (291.74 s for OS) respectively.

The ordered-subsets implementations of the fully-optimized (AO) and non-optimized (NO) forward and backward models were used in recovering the simulated vial object based on the OSEM-type algorithm in Algorithm~\ref{alg:image_update_OSEM_type}. Fig.~\ref{fig:reconstruction_results_vial_configuration} shows the results of estimating the object scattering density using 20 iterations of the OSEM-type algorithm. Each mean momentum transfer profile was obtained by averaging the MTPs across the known location occupied by a material. From the figures, we can see that the spatial distribution given in Fig.~\ref{fig:estimated_spatial_dist_vial_configuration} and the momentum transfer profiles are recovered fairly accurately in a few iterations. There is a slight shift in the peak of the MTP recovered using the fully-optimized models with 250 scatter angle bins (SABs) relative to those obtained using the non-optimized models, especially at lower momentum transfer values, corresponding to the approximation error incurred due to scatter angle interpolation. To diminish the shift, a finer sampling of the scatter angles may be used. From the figure, we see that the reconstruction results based on the fully-optimized models with 2000 SABs are a better match to those of the non-optimized models. The estimated mean measurements were obtained by applying the fully-optimized forward operator, with 250 SABs, on the estimated object scatter density. The estimated mean measurements are in agreement with the simulated noisy Poisson measurements.

\begin{figure*}[!htb]%
	\centering
	\subfigure[Simulated noisy measurements]{%
		\includegraphics[width=0.4\textwidth]{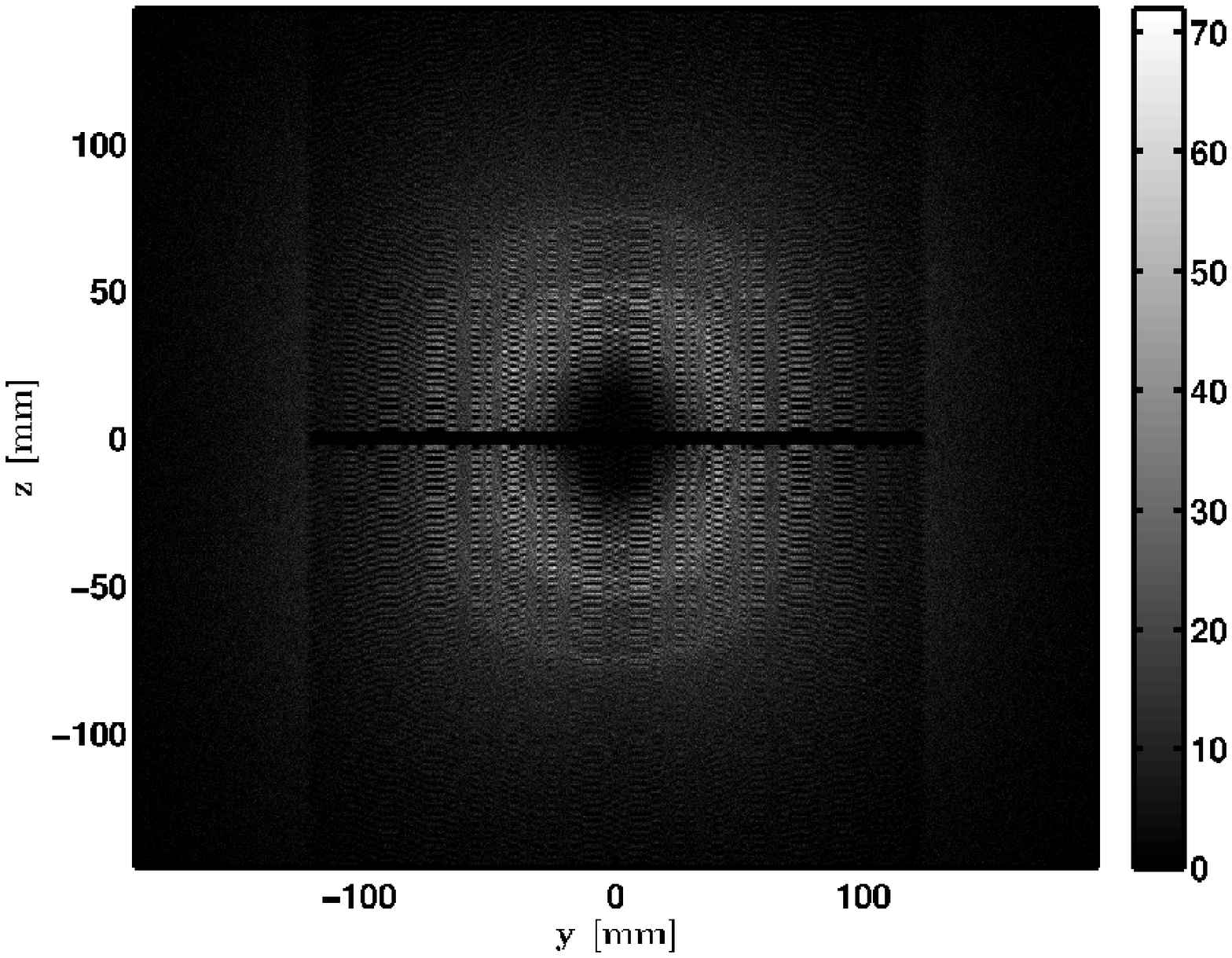}%
		\label{fig:simulated_data_vial_configuration}}%
	~
	\subfigure[Estimated mean measurements]{%
		\includegraphics[width=0.4\textwidth]{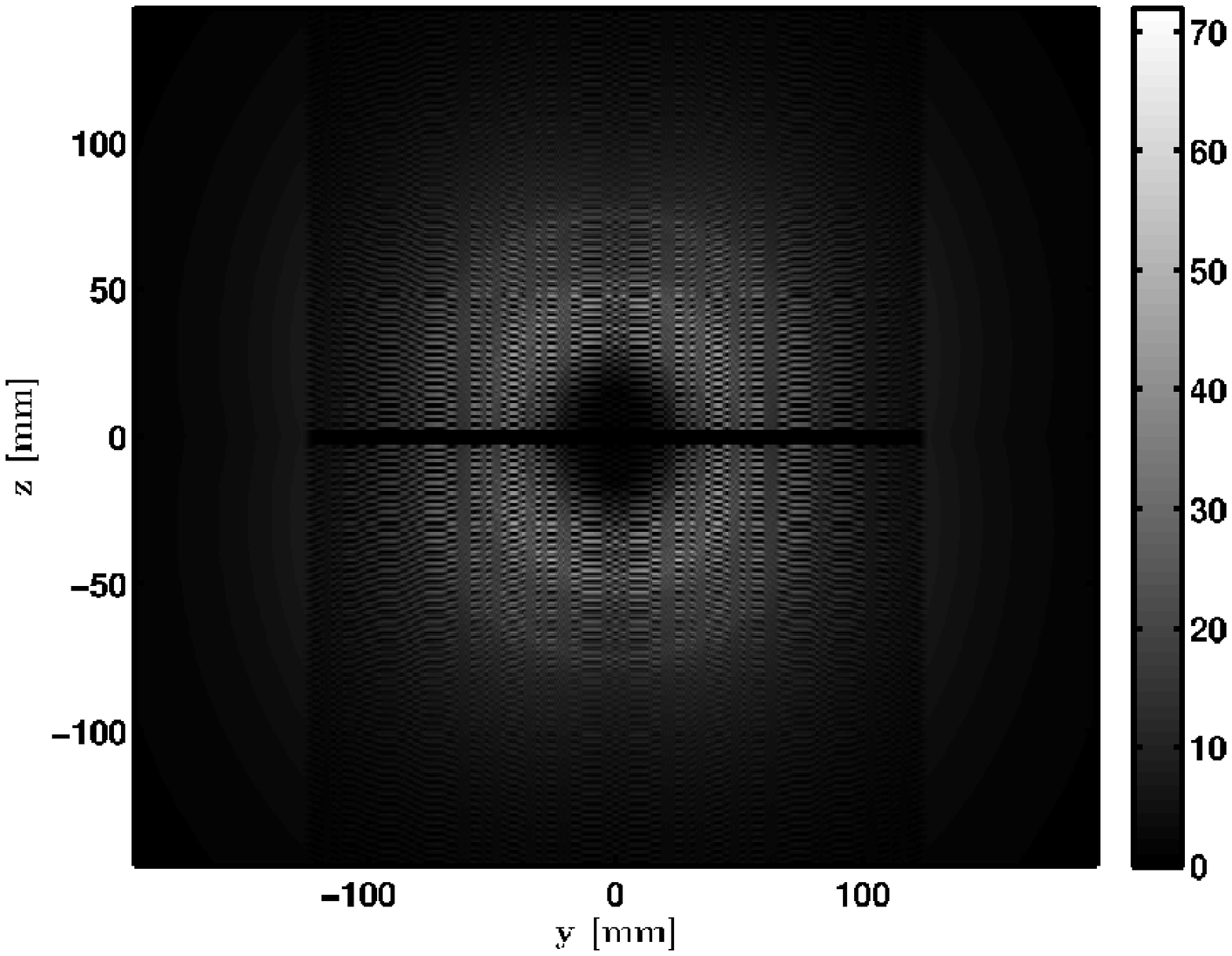}%
		\label{fig:estimated_data_vial_configuration}}%
	\\
	\subfigure[Simulated spatial distribution]{%
		\includegraphics[width=0.4\textwidth]{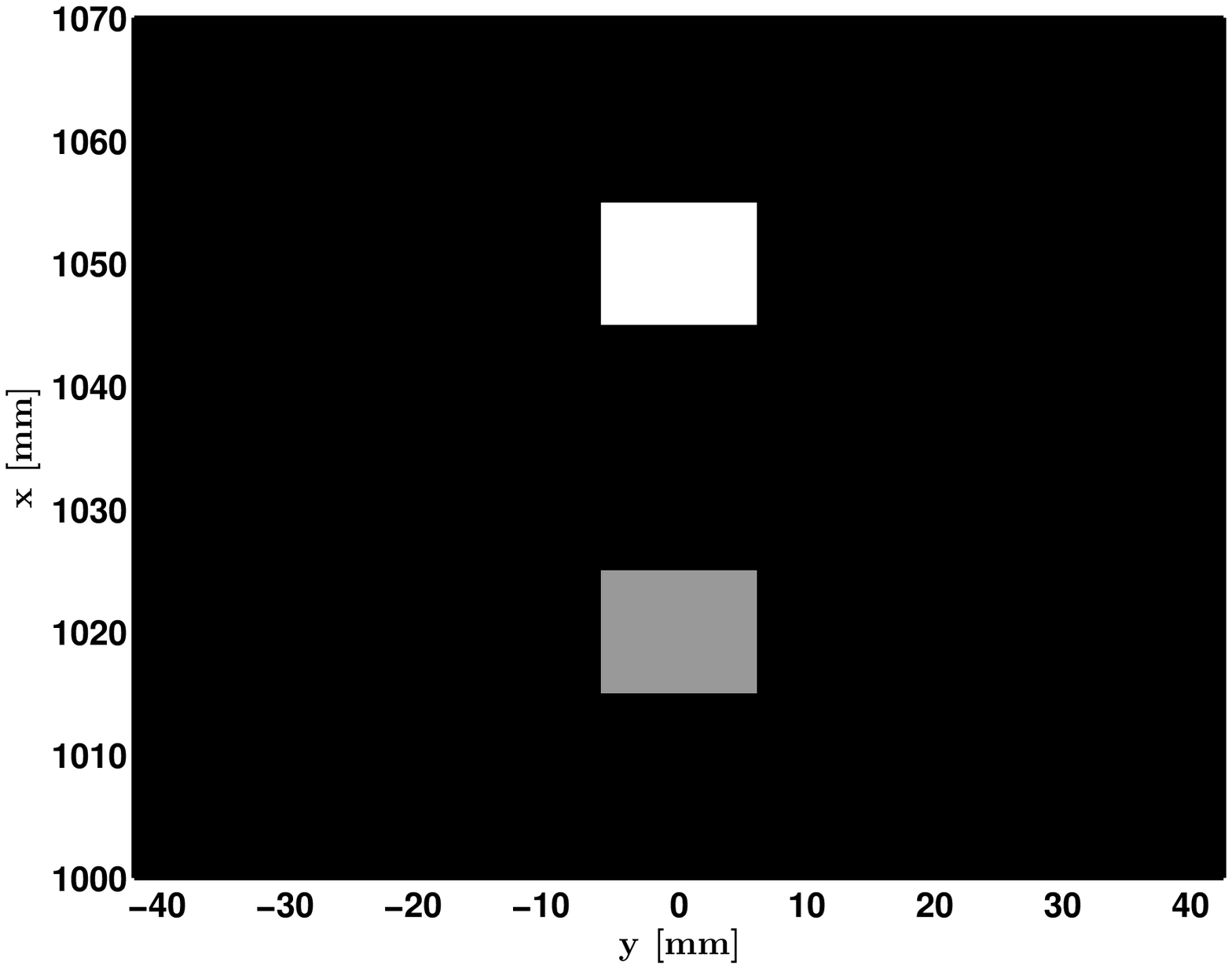}%
		\label{fig:simulated_spatial_dist_vial_configuration}}%
	~
	\subfigure[Estimated spatial distribution]{%
		\includegraphics[width=0.4\textwidth]{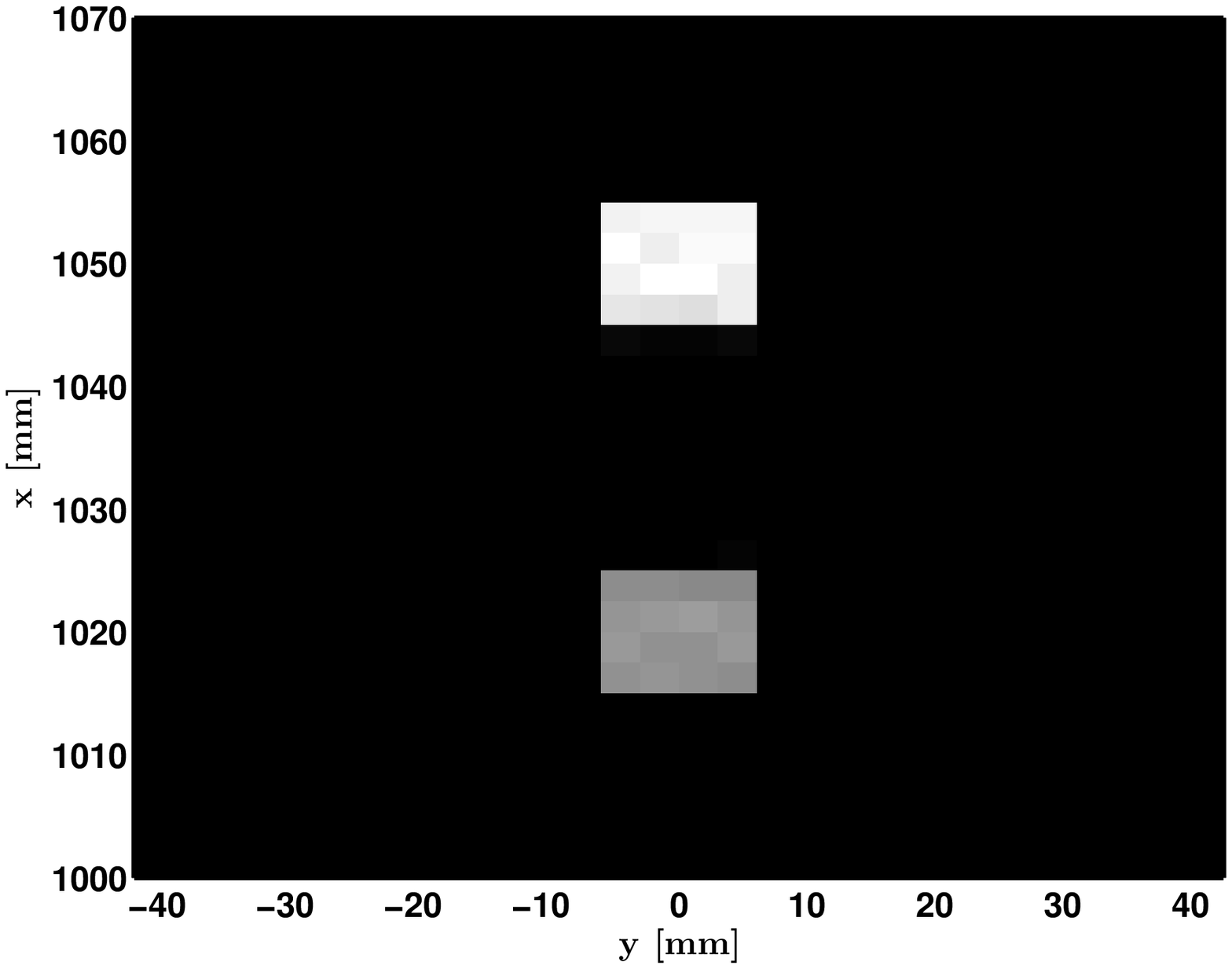}%
		\label{fig:estimated_spatial_dist_vial_configuration}}%
	\\
	\subfigure[Normalized MTP for NaCl]{%
		\includegraphics[width=0.4\textwidth]{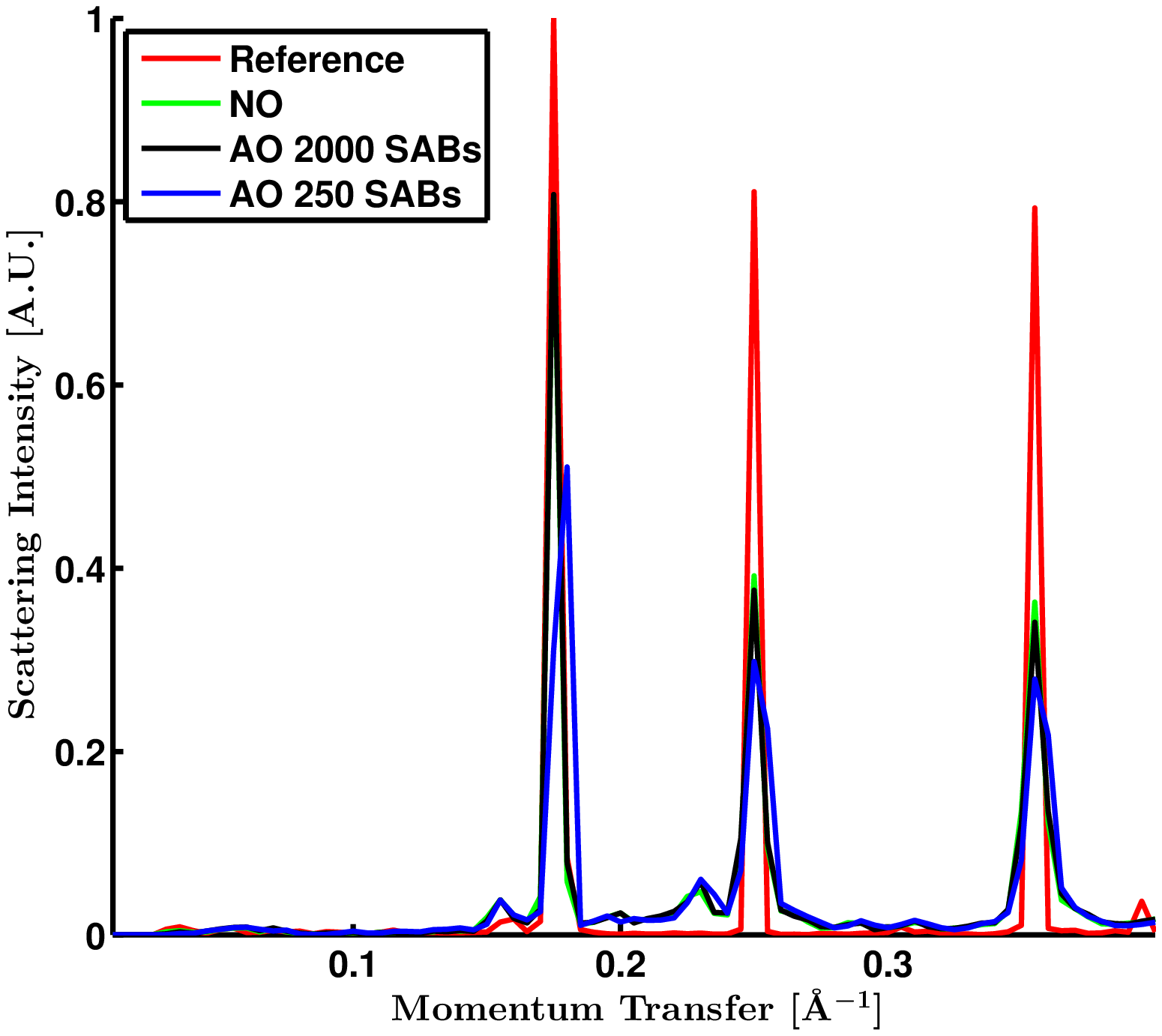}%
		\label{fig:NaCl_profile_vial_configuration}}%
	~
	\subfigure[Normalized MTP for Al]{%
		\includegraphics[width=0.4\textwidth]{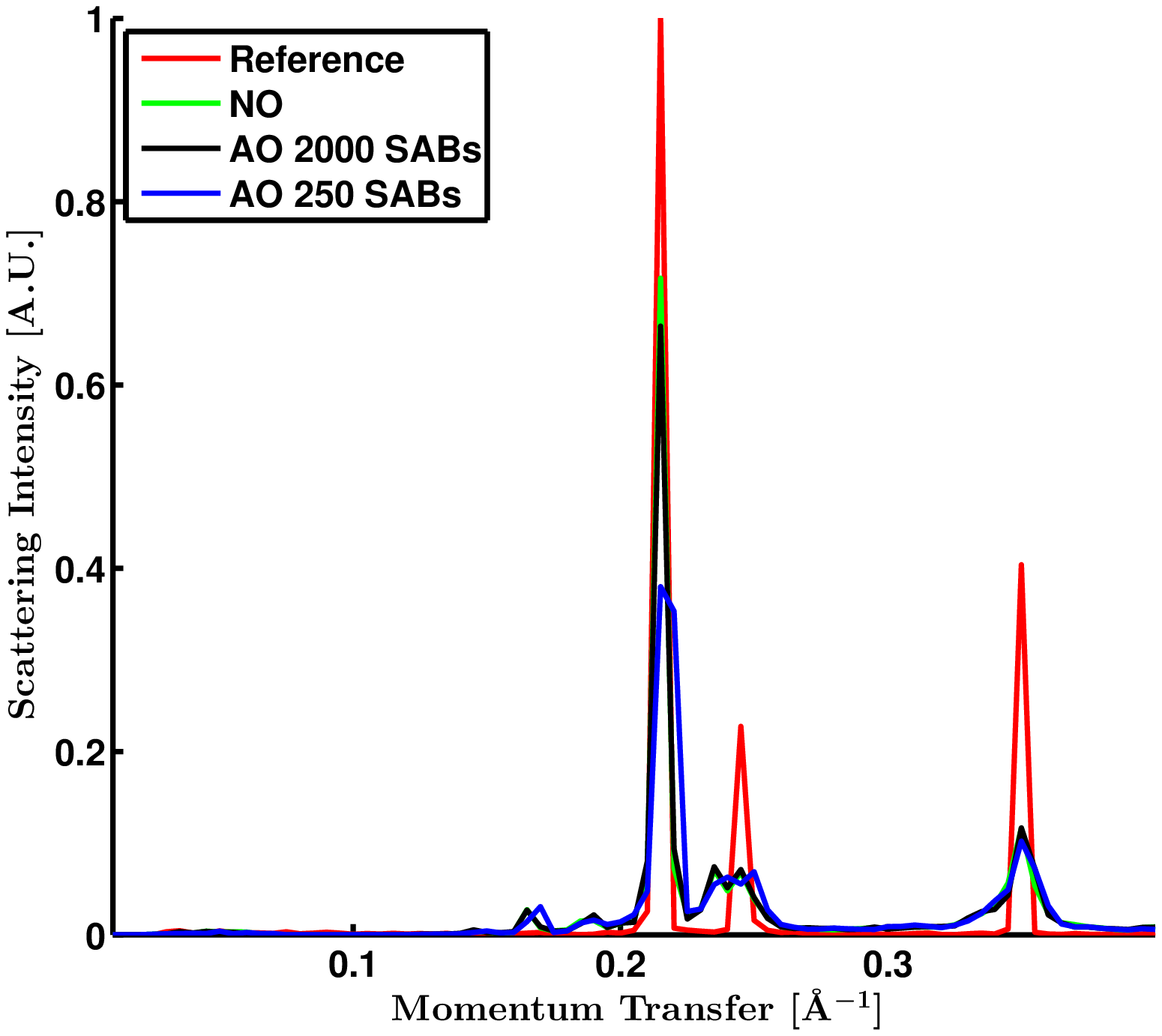}%
		\label{fig:Al_profile_vial_configuration}}%
	\caption{Estimation of object scatter density from vials of NaCl and Al crystalline powders using ordered-subsets implementations in Algorithm~\ref{alg:image_update_OSEM_type}. Each momentum transfer profile (MTP) was obtained by averaging the MTPs across each known location occupied by a material. The estimated mean measurements were obtained by applying the fully-optimized forward operator, with 250 scatter angle bins (SABs), on the estimated object scatter density. NO and AO stand for no optimization and all optimizations, respectively. Note that both NO and AO implementations use ordered subsets.}%
	\label{fig:reconstruction_results_vial_configuration}%
\end{figure*}

The estimated spatial distributions and momentum transfer profiles closely match those of the reference configuration and materials. However, from the recovered momentum transfer profile of NaCl and Al shown in Figs~\ref{fig:NaCl_profile_vial_configuration} and \ref{fig:Al_profile_vial_configuration}, respectively, we can see that a few iterations has only accurately recovered the largest peak. 

Ordered subset promises an acceleration of the convergence rate of a regular EM-type algorithm that is comparable to the number of subsets \cite{Erdogan1999}. Fig.~\ref{fig:compare_OSEM_and_EM} shows the value of the objective as a function of the iteration number. We can see that an acceleration factor of about 76 is obtained by using the 64 subsets described in Section~\ref{sec:reconstruction_algorithm}.

\begin{figure}[H]%
	\centering
	\subfigure[Initial iterations]{%
		\includegraphics[width=0.4\textwidth]{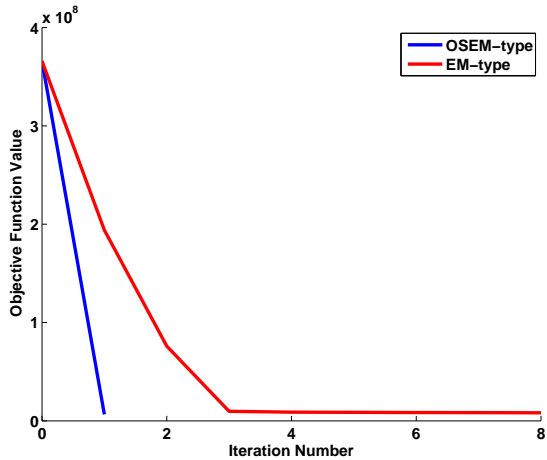}%
		\label{fig:compare_OSEM_and_EM_top}}%
	\\
	\subfigure[Subsequent iterations]{%
		\includegraphics[width=0.4\textwidth]{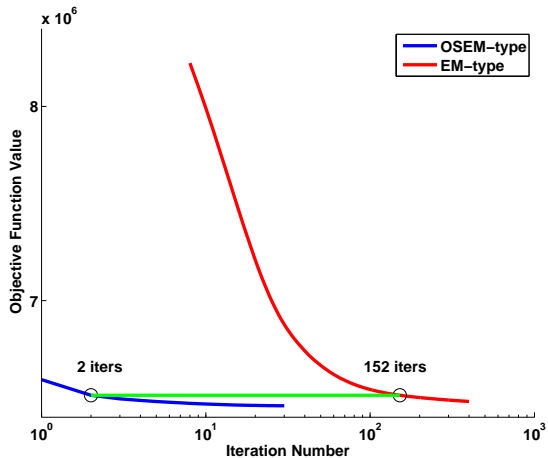}%
		\label{fig:compare_OSEM_and_EM_bottom}}%
	\caption{Objective function value versus iteration number for regular and ordered subset EM-type algorithms. The optimization curves were split into two regimes to illustrate the merits of ordered subsets. The fully-optimized operators, with 250 SABs, were utilized.}%
	\label{fig:compare_OSEM_and_EM}%
\end{figure}	

\subsection{Monte Carlo Simulation} \label{sec:MC_results}
The object utilized in the Monte Carlo simulation was a 5 mm by 50 mm by 50 mm rectangular slab of graphite powder whose momentum transfer profile is shown as the reference in Fig.~\ref{fig:mtp_profile_MC_graphite}. The Monte Carlo detector measurements includes single coherent scattering events from the slab of graphite crystalline powder, multiple scattering events, Compton scattering events, and scatter from the secondary aperture. Our forward model only accounts for single coherent scattering from the object, and the other scattering events constitute unmodeled noise.

Since the image recovery algorithm tries to explain the detector measurements based on only coherent scattering, artifacts are introduced in the reconstruction. To reduce these artifacts in the momentum transfer region of interest (ROI), from 0.01 to 0.4 \si{\angstrom}$^{-1}$, we performed the reconstruction using a larger region extending to 0.6 \si{\angstrom}$^{-1}$ and later cropped to the ROI. Fig.~\ref{fig:reconstruction_results_MC_graphite} shows the results of estimating the mean detector photon counts and the object scatter density, using 20 iterations of the OSEM-type algorithm and the non-optimized (NO) and fully-optimized (AO) forward and backward models. Again, we see that the spatial distribution and the momentum transfer profile are recovered quickly and fairly accurately. Noticeable artifacts are observed in the spatial distribution at the corners of the reconstruction region, due to the mismatch between the model used for the Monte Carlo simulation and the analytical model. For the same reason, the reconstruction using the fully-optimized (AO) models, with 250 SABs, incidentally appears to fit the reference object better than the fully-optimized, with 2000 SABs, and the non-optimized models, as shown in Fig.~\ref{fig:mtp_profile_MC_graphite}.

\begin{figure*}[!htb]%
	\centering
	\subfigure[Measurements at detectors]{%
		\includegraphics[width=0.4\textwidth]{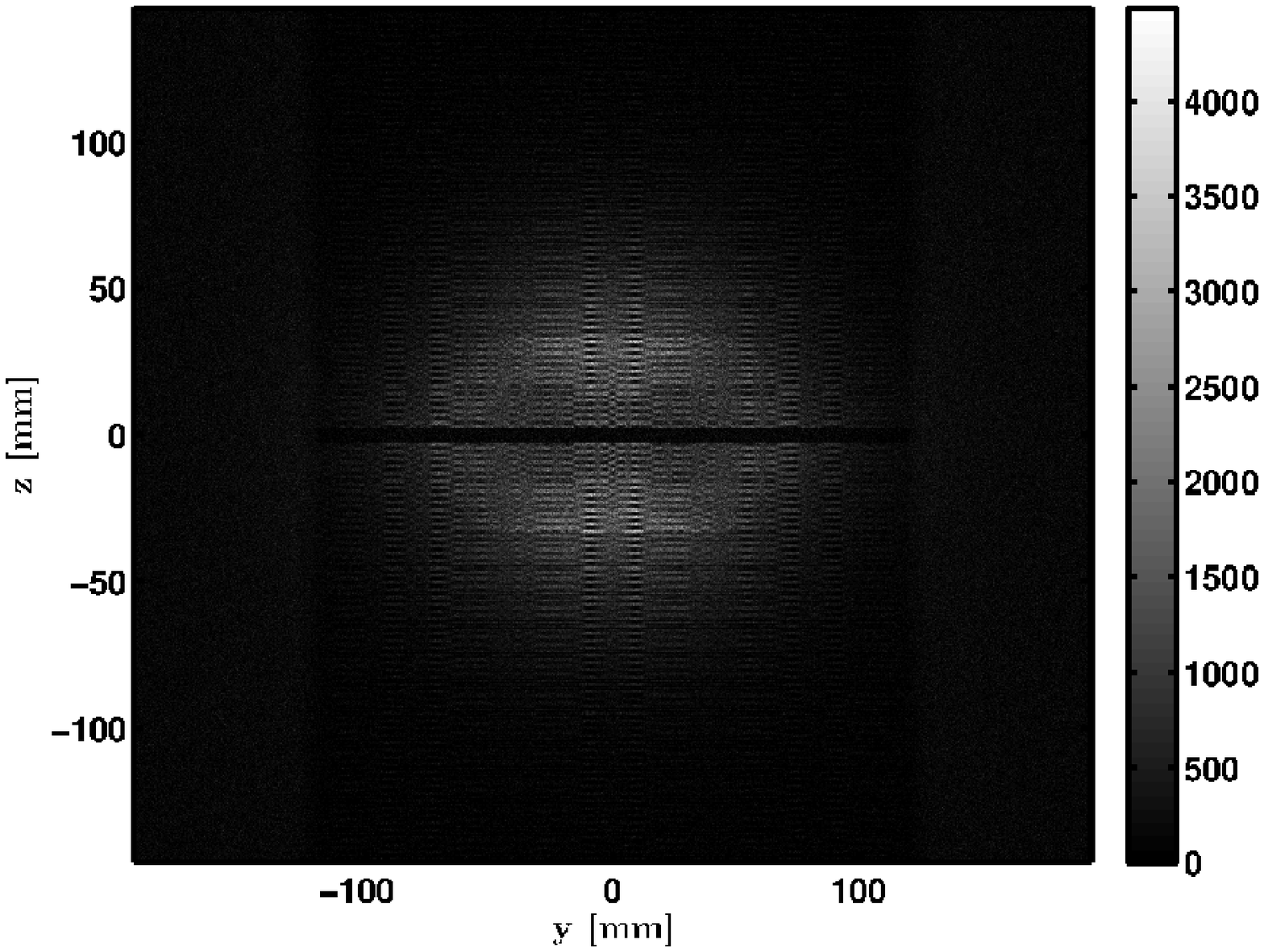}%
		\label{fig:simulated_data_MC_graphite}}%
	~
	\subfigure[Estimated mean measurements]{%
		\includegraphics[width=0.4\textwidth]{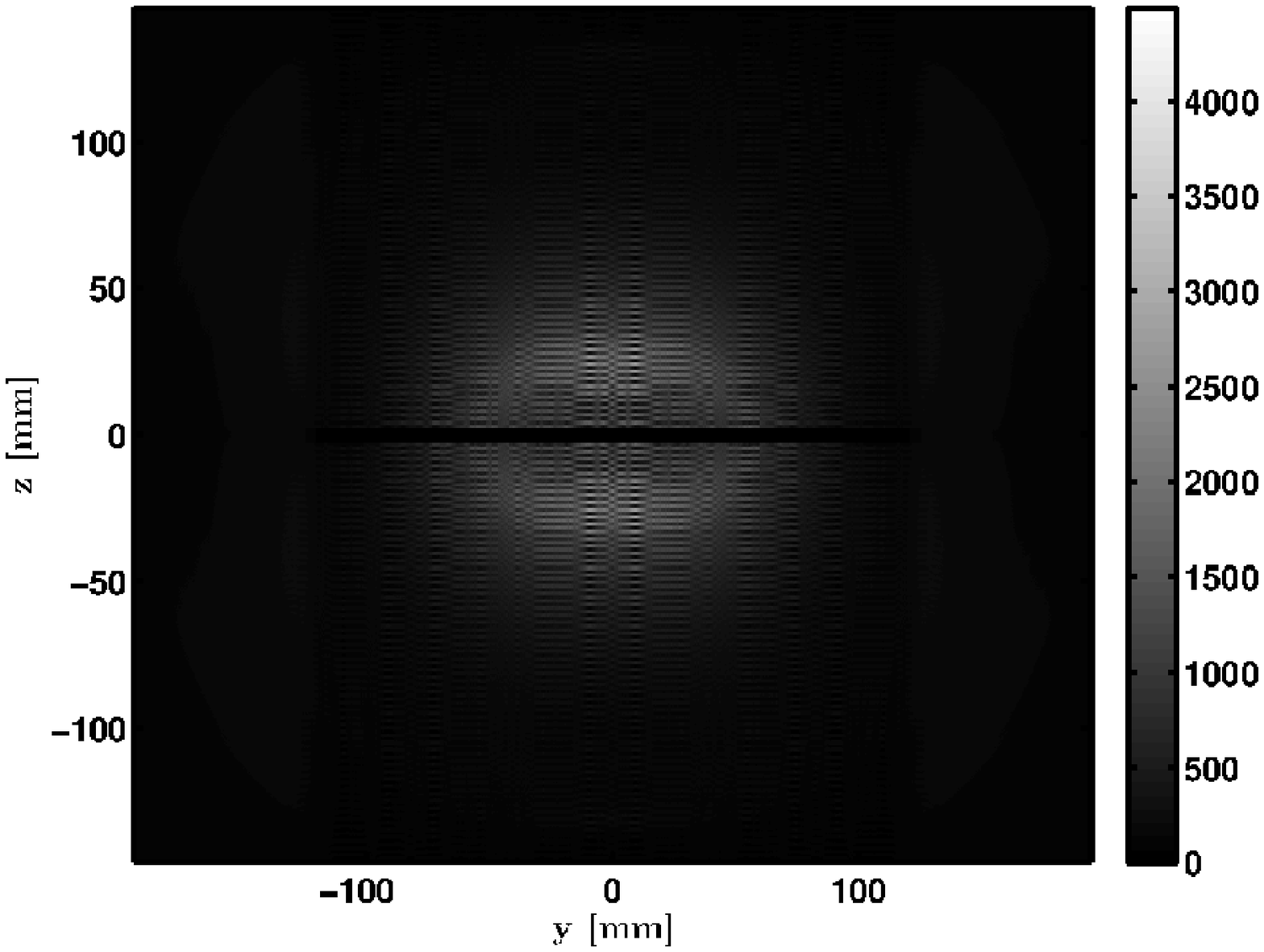}%
		\label{fig:estimated_data_MC_graphite}}%
	\\
	\subfigure[Estimated spatial distribution]{%
		\includegraphics[width=0.4\textwidth]{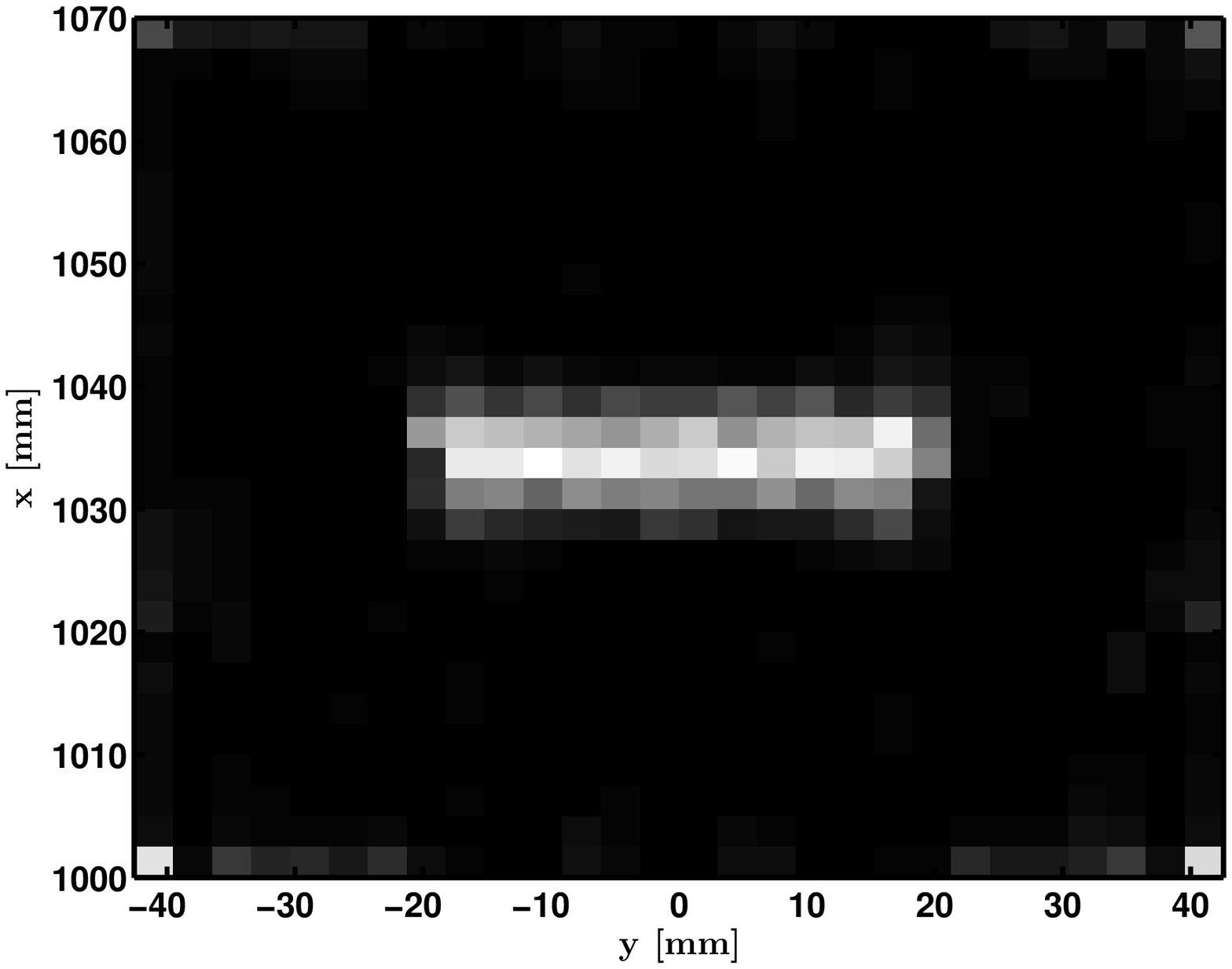}%
		\label{fig:estimated_spatial_dist_MC_graphite}}%
	~
	\subfigure[Normalized MTP for graphite]{%
		\includegraphics[width=0.4\textwidth]{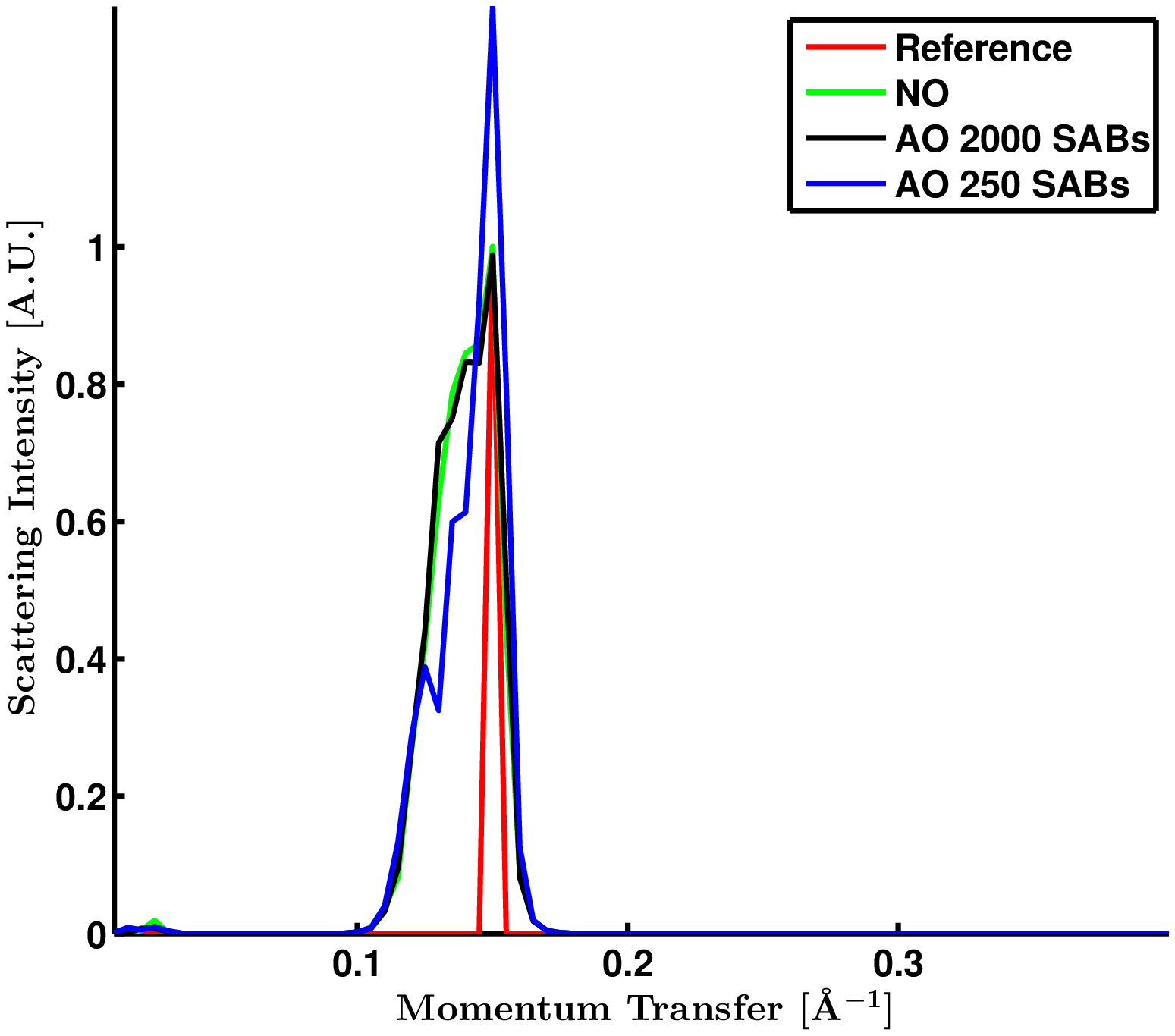}%
		\label{fig:mtp_profile_MC_graphite}}%
	\caption{Estimation of object scatter density from Monte Carlo measurements of a slice of graphite rectangular prism using ordered-subsets implementations in Algorithm~\ref{alg:image_update_OSEM_type}. The momentum transfer profile (MTP) was obtained by averaging the MTPs across the known location occupied by graphite. The estimated mean data was obtained by applying the fully-optimized forward operator, with 250 SABs, on the estimated object scatter density. See Fig.~\ref{fig:reconstruction_results_vial_configuration} for notation.}%
	\label{fig:reconstruction_results_MC_graphite}%
\end{figure*}

These results demonstrate that without prior knowledge of the location of scatterers in an object, the fan beam system, in conjunction with the OSEM-type algorithm and efficient implementations of the forward and backward models, can be used to efficiently estimate their scattering densities $f(x, y, q)$.

\section{Conclusions}\label{sec:conclusions}
The image recovery process relies heavily on a computational representation of the forward model for the data. We have identified and described three ways of significantly reducing the computational burden of the forward and backward models and the overall reconstruction algorithm, namely: scatter angle interpolation, symmetries in the system geometry, and balancing online and offline computations.

The forward and backward models and the corresponding algorithms were validated using analytic and Monte Carlo simulations. Speedups of about 146 and 32 were obtained in computing the forward and backward models using the ordered-subsets implementations, respectively. The spatial distribution and the momentum transfer profiles of the simulated objects were recovered fairly accurately using a few iterations of the ordered subset EM-type reconstruction algorithm.

The design of the coherent scatter imaging system, including the choice of the detector array, its pixel pitch, and placement relative to the source, and the secondary mask and its placement, influences the choices made in implementing the forward and backward models utilized in the reconstruction algorithm. For example, the use of different symmetry classes is dictated by their presence in the designed physical system.  Moreover, if the designed secondary mask and its placement fail to satisfy certain desirable properties that make the online computation of the mask modulation factor easy, it may be better to compute it offline, since its online computation may become the bottleneck in the computation of the forward model.

On the other hand, the need for efficient algorithms also affects several elements of the system design. For example, placing the center of the detector array so that the central ray from the source strikes it and adequate efforts ensuring proper alignment of the imaging components, permit efficient online implementations of the forward model and reconstruction algorithms. Also, having a detector with a smaller pixel pitch permits the potential use of larger translation symmetry step size, without significant deterioration of the recoverable object spatial resolution in the $y$ direction.

This paper describes the use of tomographic x-ray coherent scatter measurements in estimating volumetric scatter density, using a fan beam source distribution and a secondary mask. However, a subset of the methods we described for accelerating the computation of the forward and backward models and recovering the object scatter density can be easily carried over to other source distributions and more accurate forward models incorporating Compton scatter. For example, the scatter angle interpolation technique will apply to any x-ray coherent scatter imaging system with a polychromatic source.

\appendices
\section{Forward Model Derivation} \label{app:forward_model}
In this section of the Appendix, we describe the multiplicative factors that comprise the forward operator $H\left(\mathbf{r}^{\prime}, \mathbf{r}, q\right)$ given in Eq.~\ref{eq:forward_model_energy_integrating}.

The x-ray source is assumed to be polychromatic. Whatever source filtering and energy-dependent factors such as detector efficiency, present in the physical system are included in the effective source model whose energy-dependent flux is $\Phi(E)$, where the x-ray energy $E$ is given in keV. A photon, with energy $E$, incident on the object at $\mathbf{r}$ is scattered through an angle $\theta$, passes through or gets blocked by the coded aperture, and is measured by a detector pixel located at $\mathbf{r}^{\prime}$. For an energy-sensitive detector array, the number of photons detected is given by 

\begin{equation}
\label{eq:energy_sensitive}
\begin{aligned}
g(E, \mathbf{r}^{\prime}) &= \int \int \, \Phi(E) w_o \Delta\Omega_{s o} \frac{d\sigma_{\mbox{coh}}}{d\Omega} \Delta\Omega_{o d} A_{s o}(E, \mathbf{r}) \\
& A_{o d}(E, \mathbf{r}, \mathbf{r}^{\prime}) T(\mathbf{r}, \mathbf{r}^{\prime}) \delta\left(E - \frac{h c q}{\sin\left(\frac{\theta}{2}\right)}\right) d\mathbf{r} \, dq,
\end{aligned}
\end{equation}
\noindent{where}
\begin{eqnarray}
\frac{d\sigma_{\mbox{coh}}}{d\Omega} & = & \frac{d\sigma_{\mbox{Thompson}}}{d\Omega} f(\mathbf{r}, q), \nonumber \\
\frac{d\sigma_{\mbox{Thompson}}}{d\Omega} & = & \frac{r_e^2}{2} \left(1 + \cos^2\theta\right).\nonumber
\end{eqnarray}
\noindent{This is a modified form of the forward model in \cite{Greenberg2013snapshot} which separates the attenuation factor and solid angle $\Delta\Omega$ into two parts; from source to object and from object to detector.} 

$\Delta\Omega_{s o}$ is the solid angle subtended by the face of the object voxel illuminated by the source photon while $\Delta\Omega_{o d}$ is the solid angle subtended by the face of the detector pixel illuminated by the scattered photon. Although a fan beam is effectively planar, we assume that each pixel in the fan beam within the object space is effectively a voxel, with out-of-plane height given by the thickness of the fan beam. We assume that the thickness of the fan beam is fixed across the entire object. $w_o$ is the width of each object voxel in the direction parallel to the normal vector to the face of the object voxel illuminated by photons. $A_{s o}(E, \mathbf{r})$ is the attenuation of the photon from the source to the object, $A_{o d}(E, \mathbf{r}, \mathbf{r}^{\prime})$ is the attenuation of the scattered photon from the object to the detector, and $T(\mathbf{r}, \mathbf{r}^{\prime})$ is the transmission of the scattered photon through the coded aperture from the object to the detector. $d\sigma_{\mbox{Thompson}}/d\Omega$ is the Thompson form of the differential cross section of a free electron \cite{Harding1987}, $d\sigma_{\mbox{coh}}/d\Omega$ is the differential cross section describing the scatter of an x-ray with energy $E$ into a given solid angle, $r_e$ is the classical electron radius, and $f(\mathbf{r}, q)$ is the square of the coherent scatter form factor \cite{Harding1987, Greenberg2013snapshot}. We call $f(\mathbf{r}, q)$ the object scatter density.

The momentum transfer parameter $q$ is related to the x-ray energy $E$ through the equation \cite{Harding1987}
\begin{equation}
\label{eq:bragg}
q = \frac{E \sin\left(\frac{\theta}{2}\right)}{h c},
\end{equation}
\noindent{where $h$ is Planck's constant, $c$ is the speed of light, and $h c = 12.3984193$ keV \si{\angstrom}. Equation~\ref{eq:bragg} is often referred to as Bragg's law \cite{Greenberg2013snapshot}}.

We assume that the detector array is energy-integrating, so that photons of all energies are accumulated  at detector location $\mathbf{r}^{\prime}$. The energy-integrated flux is then given as 
\begin{align}
g(\mathbf{r}^{\prime}) &= \int \int \int \, \Phi(E) w_o \Delta\Omega_{s o} \frac{d\sigma_{\mbox{coh}}}{d\Omega} \Delta\Omega_{o d} \nonumber \\
&\quad\quad T(\mathbf{r}, \mathbf{r}^{\prime}) \delta\left(E - \frac{h c q}{\sin\left(\frac{\theta}{2}\right)}\right) \, d\mathbf{r} \, dq, \, dE, \label{eq:energy_integrating} \\
&= \int \int \, \left(\int \Phi(E) \delta\left(E - \frac{h c q}{\sin\left(\frac{\theta}{2}\right)}\right) \, dE \right) w_o \Delta\Omega_{s o} \nonumber \\
&\quad\quad \frac{r_e^2}{2} \left(1 + \cos^2\theta\right) f(\mathbf{r}, q) \Delta\Omega_{o d} T(\mathbf{r}, \mathbf{r}^{\prime}) \, d\mathbf{r} \, dq,\nonumber
\end{align}
\noindent{where we assume that the object is weakly attenuating so that $A_{s o}$ and $A_{o d}$ can be ignored.}

Bragg's law acts to pick out $(\theta, q)$ pairs over a range of energies corresponding to the intersection of the incident source energies and the energies the detector is sensitive to. These energies are represented by the domain of $\Phi(E)$. Using a Dirac delta formulation, for a point detector, the integral over $E$ may be simplified as  
\begin{equation}
\label{eq:dirac_delta_source}
\int \Phi(E)\delta\left(E - \frac{h c q}{\sin\left(\frac{\theta}{2}\right)}\right) \, dE = \Phi\left(\frac{h c q}{\sin\left(\frac{\theta}{2}\right)}\right).
\end{equation}

Due to a finite detector pixel size, there is a spread of scatter angles $\Delta\theta$ corresponding to each pair of object and detector points. This corresponds to a spread in Bragg's energy $\Delta E$, for each momentum transfer value. To account for the spread in Bragg's energy, we modify the integral in Eq.~\ref{eq:dirac_delta_source} by scaling the Dirac delta as in
\begin{equation}
\label{eq:scaled_dirac_delta_source}
\int \Phi(E)\delta\left(\frac{E - \frac{h c q}{\sin\left(\frac{\theta}{2}\right)}}{\Delta E}\right) \, dE = \Phi\left(\frac{h c q}{\sin\left(\frac{\theta}{2}\right)}\right) \Delta E.
\end{equation}
Using Bragg's law, Eq.~\ref{eq:bragg}, the spread in Bragg's energy is
\begin{equation}
\nonumber
\Delta E = \frac{h c q \cos\left(\frac{\theta}{2}\right)}{2 \sin^2\left(\frac{\theta}{2}\right)} \Delta\theta.
\end{equation}
\noindent{An estimate of $\Delta\theta$ is the angle between the scatter vectors from the object point to the midpoint of the two edges of  the finite detector pixel along the $z$ direction. This choice is in tune with the symmetry cases identified in Section~\ref{sec:computation_forward_model}.} 

The energy-integrated flux at detector point $\mathbf{r}^{\prime}$ is then given as 
\begin{equation}
\label{eq:energy_integrating_finite_detector}
\begin{aligned}
g(\mathbf{r}^{\prime}) &= \int \int \, \Phi\left(\frac{h c q}{\sin\left(\frac{\theta}{2}\right)}\right)  \frac{h c q \cos\left(\frac{\theta}{2}\right)}{2 \sin^2\left(\frac{\theta}{2}\right)} \Delta\theta w_o \Delta\Omega_{s o} \\
 & \frac{r_e^2}{2} \left(1 + \cos^2\theta\right) f(\mathbf{r}, q) \Delta\Omega_{o d} T(\mathbf{r}, \mathbf{r}^{\prime}) \, d\mathbf{r} \, dq.
\end{aligned}
\end{equation}

The source-to-object differential solid angle is given as
\begin{equation}
\label{eq:so_solid_angle}
\Delta\Omega_{s o} = G_{s o}(\mathbf{r}) A_o,
\end{equation}
\noindent{where}
\begin{equation}
\label{eq:so_geometry}
G_{s o}(\mathbf{r}) = \frac{\left|\hat{\mathbf{n}}_o \cdot \hat{\mathbf{r}}\right|}{r^2}.
\end{equation}
\noindent{$G_{s o}(\mathbf{r})$ is called the source-to-object geometry factor. $\hat{\mathbf{n}}_o$ which is illustrated in Fig.~\ref{fig:fanbeam_schematic}, is a unit normal vector to the voxel containing the object point, $\hat{\mathbf{r}}$ is the unit vector in the direction from the source to the object point, $r = |\mathbf{r}|$ is the magnitude of the vector from the source to the object point, and $A_o$ is the area of the illuminated face of the object voxel. Note that $A_o$ and $w_o$ multiply to give the volume of the object voxel $V_o$. We assume that the object space is uniformly sampled, so that the voxels have the same volume.} The front face of the object voxel illuminated by a ray is oriented such that $\hat{\mathbf{n}}_o$ is parallel to the direction of the fan beam's central ray (the x-axis). As such, $G_{s o}$ can be simplified to
\begin{equation}
\label{eq:so_geometry_simplified}
G_{s o}(\mathbf{r}) = \frac{x}{(x^2 + y^2)^{1.5}}. 
\end{equation}  

The object-to-detector differential solid angle is given as
\begin{equation}
\label{eq:od_solid_angle}
\Delta\Omega_{o d} = G_{o d}(\mathbf{s}) A_d,
\end{equation}
\noindent{where}
\begin{equation}
\label{eq:od_geometry}
G_{o d}(\mathbf{s}) = \frac{\left|\hat{\mathbf{n}}_d \cdot \hat{\mathbf{s}}\right|}{s^2}.
\end{equation}
\noindent{We call $G_{o d}(\mathbf{s})$ the object-to-detector geometry factor. $\hat{\mathbf{n}}_d$ is a unit normal vector to the detector array, $\hat{\mathbf{s}}$ is the unit vector in the direction of the scatter vector $\mathbf{s}$, from the object point to the detector point. $s = |\mathbf{s}|$ is the magnitude of the scatter vector and $A_d$ is the area of the detector pixel. We also assume that the detector pixels have the same area.} 

The secondary mask is situated between the object and the detector array as illustrated in Fig.~\ref{fig:fanbeam_schematic}. It is assumed to lie on a plane parallel to the detector array, centered at the fan beam's central ray. There is a beam-stop on the secondary mask and in front of the fan beam \cite{Maccabe2013snapshot} to attenuate the transmitted rays. This secondary mask spatially modulates the scattered x-ray flux. Given the object point and detector point, the intersection of the scatter vector and the aperture is defined. The geometric factor $T(\mathbf{r}, \mathbf{r}^{\prime})$ represents the resulting spatial modulation of the scattered x-ray flux. For planar masks, in practice, we estimate the values of the binary secondary mask image using a calibration scan with a flood illumination. The resulting measured image is rescaled from the detector plane back to the secondary mask plane and then binarized. The intersection of each scattered ray with the secondary mask plane provides a point of index into the binary in-plane mask image. The resulting binary values give $T(\mathbf{r}, \mathbf{r}^{\prime})$.

Using the geometric factors defined above and the fact that $V_o = A_o w_o$ and $A_d$ are constants, the energy-integrated flux at detector point $\mathbf{r}^{\prime}$ given in Eq.~\ref{eq:energy_integrating_finite_detector} can be recast as  
\begin{align}
g(\mathbf{r}^{\prime}) &= \int \int \, V_o A_d G_{s o}(\mathbf{r}) G_{o d}(\mathbf{s}) T(\mathbf{r}, \mathbf{r}^{\prime}) \Delta\theta \nonumber\\
&\quad\quad \Phi\left(\frac{h c q}{\sin\left(\frac{\theta}{2}\right)}\right)  \frac{h c q \cos\left(\frac{\theta}{2}\right)}{2 \sin^2\left(\frac{\theta}{2}\right)} \frac{r_e^2}{2} \left(1 + \cos^2\theta\right) \nonumber \\
&\quad\quad f(\mathbf{r}, q) \, d\mathbf{r} \, dq, \nonumber \\
&= \int \int \, C G_{s o}(\mathbf{r}) G_{o d}(\mathbf{s}) T(\mathbf{r}, \mathbf{r}^{\prime}) \Delta\theta \nonumber \\
&\quad\quad S(\theta, q) f(\mathbf{r}, q) \, d\mathbf{r} \, dq, \label{eq:energy_integrating_finite_detector_recast}
\end{align}
\noindent{where}
\begin{equation}
\label{eq:spectral_factor}
S(\theta, q) = \frac{q \left(1 + \cos^2\theta\right) \cos\left(\frac{\theta}{2}\right)}{\sin^2\left(\frac{\theta}{2}\right)} \Phi\left(\frac{h c q}{\sin\left(\frac{\theta}{2}\right)}\right),
\end{equation}
\noindent{is a spectral factor and $C = 0.25 h c V_o A_d r_e^2$ is a normalization constant that can be obtained by a calibration scan and includes scalar factors such as the area of a detector pixel and the volume of an object voxel.}

\section{Reconstruction Algorithm Details} \label{app:reconstruction_algorithm_details}
To solve optimization problem \ref{op:orig_problem}, we consider a sequence of simpler optimization problems obtained by lifting the objective function around an expansion point (e.g. the previous image estimate). In particular, we lift the objective function to obtain a surrogate objective function which is fully separable with respect to the image parameters. This choice allows us to utilize the optimized code for on-the-fly forward and backward projection. 

We adopt the EM surrogate function \cite{DePierro1995} for the data-fit term $L(\mathbf{f})$ and utilize De Pierro's convexity trick \cite{Erdogan1999B} for the regularization term. The EM surrogate function for the data-fit term is given as
\begin{equation}
\label{eq:surrogate_data_fit}
\hat{L}(\mathbf{f}) = \sum_{i = 1}^{I}\sum_{j = 1}^{J}\left[-\frac{y_i H_{i, j} \hat{f}_j}{\hat{l}_i + r_i} \ln\left(\frac{f_j}{\hat{f}_j} \left(\hat{l}_i + r_i\right)\right) + H_{i, j} f_j \right],
\end{equation}
\noindent{where $\hat{l}_i = \sum_{j\prime = 1}^{J} H_{i, j\prime} \hat{f}_{j\prime}$ and $\hat{\mathbf{f}}$ is the previous image estimate.} 

A surrogate function for the regularization term, using De Pierro's convexity trick \cite{Erdogan1999B} is given as 
\begin{equation}
\label{eq:surrogate_regularizer}
\begin{aligned}
\tilde{R}(\mathbf{f}) &= \sum_{j = 1}^{J} \sum_{k \in \mathcal{N}_{j}} 0.5 w_{j, k} \left[\psi_{\delta}\left(2 f_j - \hat{f}_j - \hat{f}_k\right) \right. \\
& \left. \quad + \psi_{\delta}\left(2 f_k - \hat{f}_j - \hat{f}_k\right) \right].
\end{aligned}
\end{equation}

Using a quadratic surrogate for the edge-preserving potential function of the form \cite{Huber1981, Erdogan1999}
\begin{equation}
\hat{\psi}_{\delta}(x) = \psi_{\delta}(\hat{x}) + \dot{\psi}_{\delta}(\hat{x})(x - \hat{x}) + \frac{1}{2} \omega_{\psi_{\delta}}(\hat{x})(x - \hat{x})^2, \nonumber
\end{equation}
\noindent{in Eq.~\ref{eq:surrogate_regularizer}, we obtain the following separable quadratic surrogate function to the regularization term}
\begin{equation}
\label{eq:quadratic_surrogate_regularizer}
\begin{aligned}
\hat{R}(\mathbf{f}) &= \sum_{j = 1}^{J} \sum_{k \in \mathcal{N}_{j}} 0.5 w_{j, k} \left[\hat{\psi}_{\delta}\left(2 f_j - \hat{f}_j - \hat{f}_k\right) \right. \\
& \left. \quad + \hat{\psi}_{\delta}\left(2 f_k - \hat{f}_j - \hat{f}_k\right) \right].
\end{aligned}
\end{equation}
\noindent{where $\hat{x}$ is the expansion point, $\dot{\psi}_{\delta}(\cdot)$ is the first derivative of $\psi_{\delta}(\cdot)$ and $\omega_{\psi_{\delta}}(x) \triangleq \dot{\psi}_{\delta}(x)/x$ is the curvature of the quadratic.}

The modified sequence of constrained convex optimization problems of interest is then
\begin{equation}
\label{op:modified_problems}
\mathbf{f}^{t + 1} = \argmin_{\mathbf{f} \geq \mathbf{0}} \hat{J}(\mathbf{f}),
\end{equation}
\noindent{with $\hat{J}(\mathbf{f}) = \hat{L}(\mathbf{f}) + \beta \hat{R}(\mathbf{f})$ and $\hat{\mathbf{f}} \triangleq \mathbf{f}^{t}$.}

Taking the derivative of $\hat{J}(\mathbf{f})$ with respect to $f_j$, we obtain the gradient ($\forall j$)
\begin{eqnarray}
\label{eq:grad_sep_obj}
\frac{\partial \hat{J}(\mathbf{f})}{\partial f_j} & = & -\frac{\hat{f}_j}{f_j} b_j^{(2)} + b_j^{(1)} \nonumber \\
& & + \beta\left[\sum_{k \in \mathcal{N}_j} w_{j, k} \dot{\hat{\psi}}_{\delta}\left(2 f_j - \hat{f}_j - \hat{f}_k\right)\right. \nonumber \\
& & \left. + \sum_{m \in \mathcal{N}_j^{\mathrm{b}}} w_{m, j} \dot{\hat{\psi}}_{\delta}\left(2 f_j - \hat{f}_j - \hat{f}_m\right)\right] \nonumber \\
& = & -\frac{\hat{f}_j}{f_j} b_j^{(2)} + b_j^{(1)} + \beta\left[f_j\left(b_j^{(3)} + b_j^{(4)}\right) \right. \\
& & \left. - \hat{f}_j\left(b_j^{(3)} + b_j^{(4)}\right) + b_j^{(5)} + b_j^{(6)}\right] \nonumber
\end{eqnarray}
\noindent{where}
\begin{eqnarray}
b_j^{(1)} & = & \sum_{i = 1}^{I} H_{i, j}, \nonumber \\
b_j^{(2)} & = & \sum_{i = 1}^{I} H_{i, j} \left(\frac{y_i}{\hat{l}_i + r_i}\right), \nonumber \\
b_j^{(3)} & = & 2\sum_{k \in \mathcal{N}_j} w_{j, k} \omega_{\psi_{\delta}}\left(\hat{f}_j - \hat{f}_k\right), \nonumber \\
b_j^{(4)} & = & 2\sum_{m \in \mathcal{N}_j^{\mathrm{b}}} w_{m, j} \omega_{\psi_{\delta}}\left(\hat{f}_j - \hat{f}_m\right), \nonumber \\
b_j^{(5)} & = & \sum_{k \in \mathcal{N}_j} w_{j, k} \dot{\psi}_{\delta}\left(\hat{f}_j - \hat{f}_k\right), \nonumber \\
b_j^{(6)} & = & \sum_{m \in \mathcal{N}_j^{\mathrm{b}}} w_{m, j} \dot{\psi}_{\delta}\left(\hat{f}_j - \hat{f}_m\right), \nonumber
\end{eqnarray}
\noindent{$\dot{\hat{\psi}}_{\delta}(x) \triangleq d\hat{\psi}_{\delta}(x)/dx$ and $\mathcal{N}_j^{\mathrm{b}}$ is the set of image voxels that have voxel $j$ as a neighbor}. For a symmetric neighborhood structure, $\mathcal{N}_j^{\mathrm{b}} = \mathcal{N}_j$ and $w_{j, k} = w_{k, j}$. Note that $\hat{l}_i$ is obtained by applying the forward operator on the previous image estimate. Moreover, $b_j^{(1)}$ and $b_j^{(2)}$ are obtained by applying the backward operator on a detector image of all ones and the ratio of the actual measurement to the predicted measurement, respectively.

Using Lemma~\ref{lem:lemma1} (see Appendix~\ref{app:lemma}), the gradient equation $\frac{\partial \hat{J}(\mathbf{f})}{\partial f_j} = 0$ admits the closed-form solution 
\begin{equation}
\label{eq:f_update}
f_j = \left\{
\begin{array}{ll}
\frac{-\chi_j^{(2)} + \sqrt{\left(\chi_j^{(2)}\right)^2 + 4 \chi_j^{(1)} \chi_j^{(3)}}}{2 \chi_j^{(1)}} & \text{if $\chi_j^{(1)} > 0$} \\
\frac{\chi_j^{(3)}}{\chi_j^{(2)}} & \text{if $\chi_j^{(1)} = 0$}
\end{array} \right. ,
\end{equation}
\noindent{where}
\begin{eqnarray}
\chi_j^{(1)} & = & \beta\left(b_j^{(3)} + b_j^{(4)}\right), \nonumber \\
\chi_j^{(2)} & = & b_j^{(1)} + \beta\left[-\hat{f}_j\left(b_j^{(3)} + b_j^{(4)}\right) + b_j^{(5)} + b_j^{(6)}\right], \nonumber \\
\chi_j^{(3)} & = & \hat{f}_j b_j^{(2)}. \nonumber
\end{eqnarray}

\section{Lemma Statement and Proof} \label{app:lemma}

\begin{lemma}
\label{lem:lemma1}
The extended function
\begin{equation}
\label{eq:1d_fun}
f(x) = \frac{1}{2} ax^2 + bx - c \ln(x) + d, \qquad x \in [0, +\infty],
\end{equation}

\noindent achieves a unique minimum at
\begin{equation}
\label{eq:overall_solution_set}
x^{*} = \left\{
\begin{array}{ll}
\frac{-b + \sqrt{b^2 + 4ac}}{2a} & \text{if $a > 0$} \\
\frac{c}{b} & \text{if $a = 0, b > 0$} \\
+\infty & \text{if $a = 0, b \leq 0$} 
\end{array} \right. ,
\end{equation}

\noindent for $a \geq 0, c \geq 0, \mbox{and }b, d \in\mathbb{R}$, where we assume $0\ln(0) = 0 \mbox{ and } ln(0) = -\infty$.
\end{lemma}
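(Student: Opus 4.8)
The plan is to treat $f$ as a convex function on the open half-line and then incorporate the endpoint $x=0$ and the point at infinity using the stated extended-real conventions. First I would compute the derivatives on $(0,\infty)$,
\[
f'(x) = ax + b - \frac{c}{x}, \qquad f''(x) = a + \frac{c}{x^2}.
\]
Since $a \geq 0$ and $c \geq 0$, we have $f''(x) \geq 0$, so $f$ is convex on $(0,\infty)$; moreover $f'' > 0$ whenever $a>0$ or $c>0$, which yields strict convexity, and hence uniqueness of the minimizer, in every non-degenerate case. This reduces the problem to locating stationary points, where the first-order condition $f'(x)=0$ is equivalent (after multiplying by $x>0$) to the quadratic $ax^2 + bx - c = 0$.

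Next I would split on whether $a>0$. When $a>0$ the quadratic has exactly one nonnegative root, $x^{*} = \frac{-b+\sqrt{b^2+4ac}}{2a}$, the other root being nonpositive because the product of the roots is $-c/a \le 0$; this is the claimed minimizer. I would confirm it is a global minimum on $[0,+\infty]$ by checking the limits: as $x\to+\infty$ the quadratic term forces $f\to+\infty$, while as $x\to 0^{+}$ either $-c\ln x\to+\infty$ (if $c>0$) or $f\to d$ (if $c=0$), so in every subcase the interior critical point dominates the boundary. When $a=0$, stationarity collapses to $bx=c$. If $b>0$ this gives $x^{*}=c/b$, and the same limiting analysis, now using $bx\to+\infty$ at infinity, confirms it is the global minimizer. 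If $b\le 0$ I would instead observe that $f'(x)=b-\frac{c}{x}<0$ for all $x>0$, since it is $b\le 0$ minus a nonnegative quantity, so $f$ is strictly decreasing and its infimum is attained at the extended point $+\infty$, matching the third branch.

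The step I expect to be the main obstacle is not the generic calculus but the bookkeeping of the degenerate and boundary subcases, and showing that the single closed-form formula reproduces them. In particular, when $c=0$ the logarithmic term vanishes and the unconstrained vertex may lie at or below $0$; I would check that $\frac{-b+\sqrt{b^2}}{2a}=\frac{-b+|b|}{2a}$ automatically returns the correct constrained minimizer, namely $0$ when $b\ge 0$ and $-b/a$ when $b<0$, so no separate boundary case is needed. I would also flag the fully degenerate case $a=b=c=0$, where $f\equiv d$ is constant and the minimizer fails to be unique; this lies outside the intended regime, since in the application the nonnegative data term $b_j^{(1)}$ contributes to $b$, so I would either exclude it or note that the convention assigns $+\infty$. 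The remaining care is purely notational: verifying that the conventions $0\ln 0 = 0$ and $\ln 0 = -\infty$ make $f$ lower semicontinuous on the closed extended interval, so that \emph{achieves a minimum} is well posed.
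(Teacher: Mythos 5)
Your proof is correct and follows essentially the same route as the paper's: derivatives and convexity on $(0,\infty)$, the case split $a>0$, then $a=0,\,b>0$, then $a=0,\,b\le 0$, reduction of stationarity to the quadratic $ax^2+bx-c=0$, and verification that the closed form $\frac{-b+|b|}{2a}$ automatically handles the $c=0$ boundary subcase. Your explicit flagging of the fully degenerate case $a=b=c=0$ (where $f\equiv d$ and uniqueness genuinely fails) is in fact a refinement the paper's own Case 3 glosses over when it asserts the function is strictly decreasing and unbounded below.
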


\begin{proof}
The first and second derivatives of $f(x)$ are (for $x > 0$)
\begin{equation}
\nonumber
\frac{df(x)}{dx} = ax + b - \frac{c}{x},
\end{equation}
\noindent{and }
\begin{equation}
\nonumber
\frac{d^2f(x)}{dx^2} = a + \frac{c}{x^2},
\end{equation}
\noindent{respectively.}

Since $\frac{d^2f(x)}{dx^2} \geq 0$, $f(x)$ is a convex function. Consider the cases of $a > 0$, $a = 0, b > 0$, and $a = 0, b \leq 0$

\begin{case}
$a > 0$

Since $\frac{d^2f(x)}{dx^2} > 0$ and $\lim_{x \rightarrow \infty} \frac{d^2f(x)}{dx^2} = a > 0$, $f(x)$ is strongly convex. Thus, $f(x)$ admits a unique minimizer.

The critical point of $f(x)$ occurs at the solution of the gradient equation $\frac{df(x)}{dx} = 0$ that lies within the domain of $f(x)$. The only endpoint of the domain of $f(x)$ is at $x = 0$. The unique minimizer occurs either at $x = 0$ or the critical point.

Consider the sub-cases of $c = 0$ and $c > 0$. When $c = 0$, the gradient equation becomes $ax + b = 0$, so that $x = -\frac{b}{a}$ is a critical point and the unique minimizer, if $b \leq 0$. If $b > 0$, the endpoint $x = 0$ is the unique minimizer. 

Thus, when $a > 0 \mbox{and }c = 0$,
\begin{equation}
\nonumber
x^{*} = \left\{
\begin{array}{ll}
-\frac{b}{a} & \text{if $b \leq 0$} \\
0 & \text{if $b > 0$}
\end{array} \right. .
\end{equation}

When $c > 0$, $f(0) = +\infty$, so that the endpoint $x = 0$ is not an absolute minimum.

Consider the gradient equation (with $x > 0$)
\begin{equation}
\label{eq:1d_grad_eqn}
ax + b -\frac{c}{x} = 0.
\end{equation}

Multiplying both sides of Eq.~\ref{eq:1d_grad_eqn} by $x$, we get the quadratic equation
\begin{equation}
\label{eq:1d_quad_eqn}
ax^2 + bx - c = 0.
\end{equation}

The discriminant of the quadratic expression is $D = b^2 + 4ac > 0$. In addition, since $4ac > 0$, $|b| < \sqrt{b^2 + 4ac}$, so that Eq.~\ref{eq:1d_quad_eqn} has exactly two solutions with opposite signs. Since $x$ must be non-negative, the only valid solution to Eq.~\ref{eq:1d_grad_eqn} and \ref{eq:1d_quad_eqn} and the unique minimizer is
\begin{equation}
\label{eq:1d_quad_soln}
x^{*} = \frac{-b + \sqrt{b^2 + 4ac}}{2a}.
\end{equation}

Equation~\ref{eq:1d_quad_soln} also handles the sub-case of $c = 0$. For that case, $x^{*} = \frac{-b + \sqrt{b^2}}{2a} = \frac{-b + |b|}{2a}$. When $b \leq 0$, $|b| = -b$, so that $x^{*} = -\frac{b}{a}$. When $b > 0$, $|b| = b$, and $x^{*} = 0$.

Thus, $x^{*}$, as given by Eq.~\ref{eq:1d_quad_soln}, is the unique minimizer of $f(x)$, when $a > 0$.
\end{case}

\begin{case}
$a = 0, b > 0$

In this case, the gradient equation becomes $b - c/x = 0$, with solution $x = \frac{c}{b} \geq 0$. Since at the endpoint of the domain of $f$, $f(0) = +\infty$, $x^{*} = \frac{c}{b}$ is the unique minimizer of $f(x)$.
\end{case}

\begin{case}
$a = 0, b \leq 0$

In this case, the gradient function is also $b - c/x$. However, the function is strictly decreasing and unbounded below, so that the minimum of $-\infty$ is achieved at $x^{*} = +\infty$.
\end{case}

Thus, $x^{*}$ in Eq.~\ref{eq:overall_solution_set} is the unique minimizer of the function $f(x)$ given in Eq.~\ref{eq:1d_fun}.
\end{proof}

\bibliography{./References}
\bibliographystyle{IEEEtran}
\end{document}